\renewcommand\bibname{References}
\let\proof\@undefined                        
\let\endproof\@undefined                  
\renewcommand{\todo}[2][]{\tikzexternaldisable\@todo[#1]{#2}\tikzexternalenable}
\newcounter{mycomment} 
\newlength{\luw}
\newlength{\luh}
\DeclareMathAlphabet{\mathcalligra}{T1}{calligra}{m}{n}
\DeclareMathAlphabet{\mathantt}{OT1}{antt}{li}{it}
\DeclareMathAlphabet{\mathpzc}{OT1}{pzc}{m}{it}
\let\emptyset\varnothing
\renewcommand{\mid}{\:|\,}
\newcommand{\Natural}{\mathbb{N}}
\newcommand{\A}{\mathcal{A}}
\newcommand{\Bool}{\mathbb{B}}
\def\G{\mathcal{G}}
\def\C{\mathcal{C}}
\newcommand{\V}{\mathcal{V}}
\newcommand{\I}{\mathcal{I}}
\renewcommand{\P}{\mathcal{P}}
\newcommand{\E}{\mathcal{E}}
\renewcommand{\L}{\mathcal{L}}
\newcounter{myRomanCounter}
\newcommand{\gray}{\color[rgb]{0.5,0.5,0.5}}
\newcommand{\red}{\color[rgb]{1,0,0}}
\renewcommand*{\paragraph}[1]{\par\noindent{\normalsize\bf #1}\,\xspace}
\def\phi{\delta}
\def\epsilon{\varepsilon}
\newcommand{\Section}[1]{\cref{#1}}
\def\S{\mathcal{S}}
\newcommand{\revisit}[1][]{%
\ifthenelse{\equal{#1}{}}{
\ensuremath{\red \triangle}\xspace}{%
{\ensuremath{\red \rhd}\xspace}%
{\gray #1}%
{\ensuremath{\red \lhd}\xspace}%
}%
}
\def\anchor [#1]#2{%
\phantomsection{}#1\label{#2}%
\def\arga{#2}%
\global\expandafter\def\csname#2\endcsname{%
\hyperref[#2]{#1}\xspace%
}%
}%
\def\codefunction [#1]#2{%
\phantomsection{}\label{#2}{\ttfamily #1\xspace}%
\def\arga{#2}%
\global\expandafter\def\csname#2\endcsname{%
\hyperref[#2]{\ttfamily #1}\xspace%
}%
}
\newcolumntype{L}[1]{>{\raggedright\let\newline\\\arraybackslash\hspace{0pt}}m{#1}}
\newcolumntype{C}[1]{>{\centering\let\newline\\\arraybackslash\hspace{0pt}}m{#1}}
\newcolumntype{R}[1]{>{\raggedleft\let\newline\\\arraybackslash\hspace{0pt}}m{#1}}
\def\epsilon{\varepsilon}
\newcommand{\leqAP}{\mathbin{{\leq}_{\text{\tiny AP}}}}
\newlength{\myskip}
\renewenvironment{enumerate}%
  {\begin{list}{\arabic{enumi}.}%
     {\topsep=0in\itemsep=0in\parsep=0pt\partopsep=0in\usecounter{enumi}}%
   }{\end{list}}
\renewenvironment{itemize}%
  {\begin{list}{$\bullet$}%
     {\topsep=0in\itemsep=0pt\parsep=0pt\partopsep=0in\usecounter{itemi}}%
   }{\end{list}\addvspace{0pt}}
\let\corollary\@undefined
\let\endcorollary\@undefined
\let\definition\@undefined
\let\enddefinition\@undefined
\let\proof\@undefined
\let\endproof\@undefined
\let\theorem\@undefined
\let\c@theorem\@undefined
\let\endtheorem\@undefined
\let\lemma\@undefined
\let\endlemma\@undefined
\let\example\@undefined
\let\c@example\@undefined
\let\endexample\@undefined
\let\remark\@undefined
\let\endremark\@undefined
\let\proposition\@undefined
\let\endproposition\@undefined
\let\property\@undefined
\let\endproperty\@undefined
\newtheoremstyle{tightItalic}
  {0.5\myskip}
  {0\myskip}
  {}
  {}
  {\itshape}
  {.}
  { }
  {}
\newtheoremstyle{tightBf}
  {0.5\myskip}
  {0.5\myskip}
  {}
  {}
  {\bf}
  {.}
  {.5em}
  {}
\theoremstyle{definition}
\theoremstyle{tightBf}
\declaretheorem[style=tightBf,parent=section]{thm}
\numberwithin{thm}{section}
\declaretheorem[style=tightBf,sibling=thm,name=Theorem]{theorem}
\declaretheorem[style=tightBf,sibling=theorem,name=Definition]{definition}
\declaretheorem[style=tightBf,sibling=theorem,name=Corollary]{corollary}
\numberwithin{example}{section}
\declaretheorem[style=tightItalic,sibling=theorem,name=Remark]{remark}
\theoremstyle{tightItalic}
\newtheorem*{proof}{Proof}
\newtheorem*{proofsketch}{Proof Sketch}
\crefname{section}{Section}{Sections}
\Crefname{section}{Section}{Sections}
\begin{document}

\begin{textblock}{14}(1,0.4)
\noindent
Author's version. The final publication is available at \url{link.springer.com}.
\end{textblock}


%
%

\pagestyle{headings}
\mainmatter
\def\ECCV16SubNumber{1}  

\title{Complexity of Discrete Energy Minimization Problems}



\author{Mengtian Li\textsuperscript{ 1} \qquad Alexander Shekhovtsov\textsuperscript{ 2} \qquad Daniel Huber\textsuperscript{ 1}}
\institute{\textsuperscript{ 1}The Robotics Institute, Carnegie Mellon University\\\textsuperscript{ 2}Institute for Computer Graphics and Vision, Graz University of Technology \\
\email{ \textsuperscript{ 1}\{mtli, dhuber\}@cs.cmu.edu \quad \textsuperscript{ 2}shekhovtsov@icg.tugraz.at}}

\authorrunning{Mengtian Li, Alexander Shekhovtsov and Daniel Huber}



\maketitle

\newboolean{InAppendix}
\setboolean{InAppendix}{false}
\begin{abstract}


Discrete energy minimization is widely-used in computer vision and machine learning for problems such as MAP inference in graphical models.  The problem, in general, is notoriously intractable, and finding the global optimal solution is known to be NP-hard. However, is it possible to approximate this problem with a reasonable ratio bound on the solution quality in polynomial time?  We show in this paper that the answer is no.  Specifically, we show that general energy minimization, even in the 2-label pairwise case, and planar energy minimization with three or more labels are exp-APX-complete.  This finding rules out the existence of any approximation algorithm with a sub-exponential approximation ratio in the input size for these two problems, including constant factor approximations. Moreover, we collect and review the computational complexity of several subclass problems and arrange them on a complexity scale consisting of three major complexity classes -- PO, APX, and exp-APX, corresponding to problems that are solvable, approximable, and inapproximable in polynomial time. Problems in the first two complexity classes can serve as alternative tractable formulations to the inapproximable ones. This paper can help vision researchers to select an appropriate model for an application or guide them in designing new algorithms.

\keywords{Energy minimization, complexity, NP-hard, APX, exp-APX, NPO, WCSP, min-sum, MAP MRF, QPBO, planar graph}

\end{abstract}

\setcounter{tocdepth}{2}
\makeatletter
\renewcommand*\l@author[2]{}
\renewcommand*\l@title[2]{}
\makeatletter

\section{Introduction}

Discrete energy minimization, also known as min-sum labeling~\cite{Werner-PAMI07} or weighted constraint satisfaction (WCSP)\footnote{WCSP is a more general problem, considering a bounded plus operation.
It is itself a special case of valued CSP, where the objective takes values in a more general valuation set.}~\cite{jeavons2014complexity}, is a popular model for many problems in computer vision, machine learning, bioinformatics, and natural language processing. In particular, the problem arises in maximum a posteriori (MAP) inference for Markov (conditional) random fields (MRFs/CRFs)~\cite{Lauritzen96}.  In the most frequently used pairwise case, the {\em discrete energy minimization problem} (simply ``energy minimization'' hereafter) is defined as
\begin{align} \label{eq:1}
\min_{x\in\mathcal{L}^\V} \sum_{u\in\mathcal{V}} f_u(x_u) + \sum_{(u,v)\in\mathcal{E}} f_{uv}(x_u,x_v),
\end{align}
where $x_u$ is the label for node $u$ in a graph $\mathcal{G}=(\mathcal{V}, \mathcal{E})$. When the variables $x_u$ are binary (Boolean): $\L = \Bool = \{0,1\}$, the problem can be written as a quadratic polynomial in $x$~\cite{BorosHammer01} and is known as quadratic pseudo-Boolean optimization (QPBO)~\cite{BorosHammer01}. 

In computer vision practice, energy minimization has found its place in semantic segmentation~\cite{ren2012rgb}, pose estimation \cite{yang2011articulated}, scene understanding  \cite{schwing2012efficient}, depth estimation \cite{liu2010single}, optical flow estimation \cite{xu2012motion}, image in-painting \cite{shekhovtsov-2012-curvature}, and image denoising \cite{barbu2009learning}.
For example, tree-structured models have been used to estimate pictorial structures such as body skeletons or facial landmarks~\cite{yang2011articulated}, multi-label Potts models have been used to enforce a smoothing prior for semantic segmentation~\cite{ren2012rgb}, and general pairwise models have been used for optimal flow estimation~\cite{xu2012motion}.
However, it may not be appreciated that the energy minimization formulations used to model these vision problems have greatly varied degrees of tractability or {\em computational complexity}. For the three examples above, the first allows efficient exact inference, the second admits a constant factor approximation, and the third has no quality guarantee on the approximation of the optimum. 

%

The study of complexity of energy minimization is a broad field. Energy minimization problems are often intractable in practice except for special cases.  While many researchers analyze the time complexity of their algorithms (e.g., using big O notation), it is beneficial to delve deeper to address the difficulty of the underlying problem.  The two most commonly known complexity classes are P (polynomial time) and NP (nondeterministic polynomial time: all decision problems whose solutions can be verified in polynomial time). However, these two complexity classes are only defined for {\em decision} problems. The analogous complexity classes for {\em optimization} problems are PO (P optimization) and NPO (NP optimization: all optimization problems whose solution feasibility can be verified in polynomial time). Optimization problems form a superset of decision problems, since any decision problem can be cast as an optimization over the set $\{$yes, no$\}$, \ie, P $\subset$ PO and NP $\subset$ NPO. The NP-hardness of an optimization problem means it is at least as hard as (under Turing reduction) the hardest decision problem in the class NP. If a problem is NP-hard, then it is not in PO assuming P $\neq$ NP. 

Although optimal solutions for problems in NPO, but not in PO, are intractable, it is sometimes possible to guarantee that a good solution (i.e., one that is worse than the optimal by no more than a given factor) can be found in polynomial time.  These problems can therefore be further classified into class APX (constant factor approximation) and class exp-APX (inapproximable) with increasing complexity (Figure~\ref{fig:hardnessaxis}).  We can arrange energy minimization problems on this more detailed complexity scale, originally established in \cite{ausiello1999complexity}, to provide vision researchers a new viewpoint for complexity classification, with a focus on NP-hard optimization problems.

In this paper, we make three core contributions, as explained in the next three paragraphs. First, we prove the inapproximability result of QPBO and general energy minimization. Second, we show that the same inapproximability result holds when restricting to planar graphs with three or more labels. In the proof, we propose a novel micro-graph structure-based reduction that can be used for algorithmic design as well. Finally, we present a unified framework and an overview of vision-related special cases where the energy minimization problem can be solved in polynomial time or approximated with a constant, logarithmic, or polynomial factor.

\begin{figure}
\begin{center}
   \includegraphics[width=0.6\linewidth]{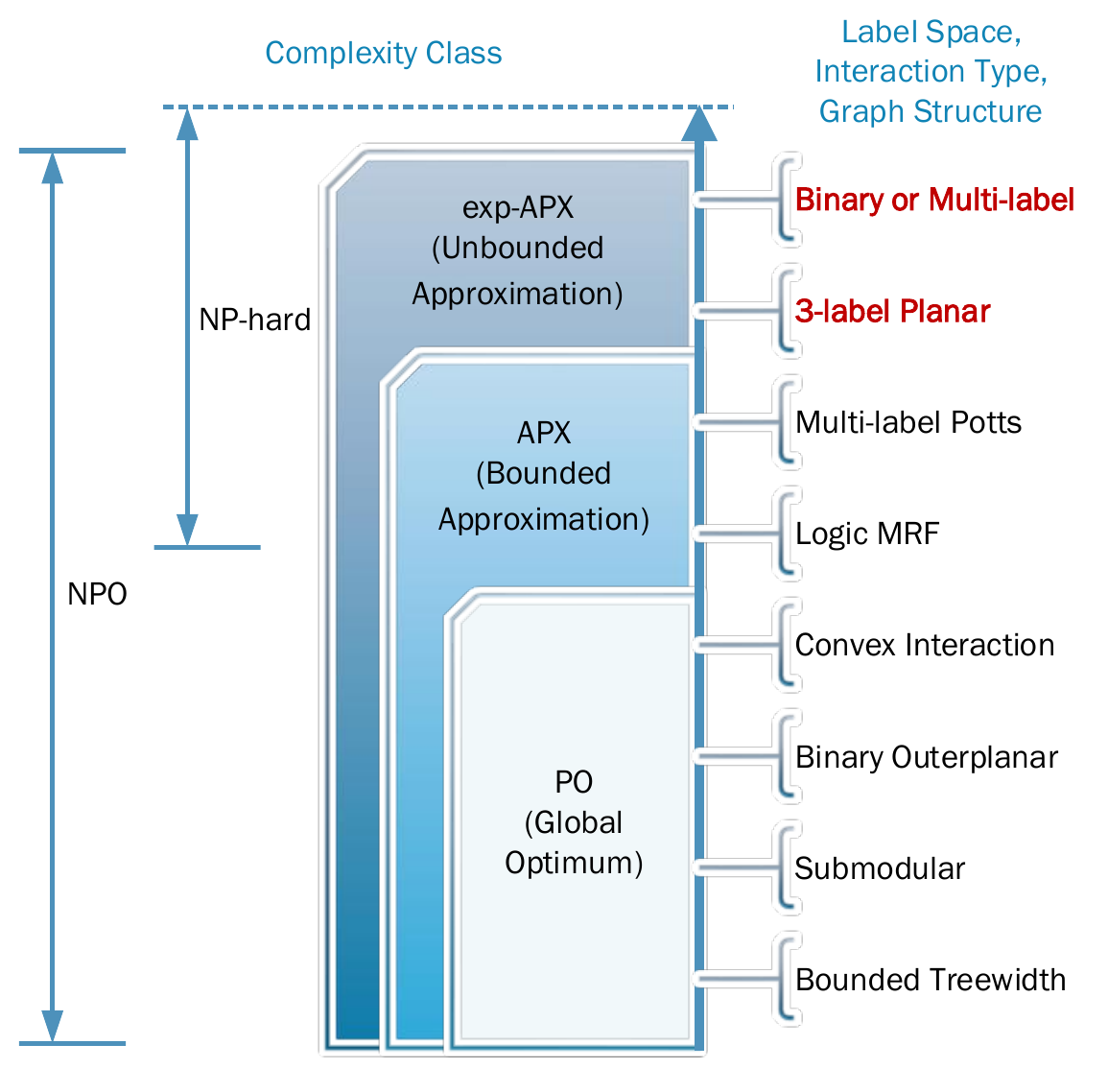}
\end{center}
    \caption{Discrete energy minimization problems aligned on a complexity axis. Red/boldface indicates new results proven in this paper. This axis defines a partial ordering, since problems within a complexity class are not ranked. Some problems discussed in this paper are omitted for  simplicity.}
\label{fig:hardnessaxis}
\end{figure}

\textbf{Binary and multi-label case}  (Section~\ref{sec:gencase}). It is known that QPBO (2-label case) and the general energy minimization problem (multi-label case) are NP-hard~\cite{BorosHammer02}, because they generalize such classical NP-hard optimization problems on graphs as vertex packing (maximum independent set) and the minimum and maximum cut problems~\cite{Karp-72}.
In this paper, we show a stronger conclusion. \emph{We prove that QPBO as well as general energy minimization are complete (being the hardest problems) in the class exp-APX.} Assuming P $\neq$ NP, this implies that a polynomial time method
cannot have a guarantee of finding an approximation within a constant factor of the optimal, and in fact, the only possible factor in polynomial time is exponential in the input size. 
In practice, this means that a solution may be essentially arbitrarily bad.

\textbf{Planar three or more label case}  (Section~\ref{sec:plcase}). Planar graphs form the underlying graph structure for many computer vision and image processing tasks. It is known that efficient exact algorithms exist for some special cases of planar 2-label energy minimization problems~\cite{Schraudolph-10}.  In this paper, we show that for the case of three or more labels, planar energy minimization is exp-APX-complete, which means these problems are as hard as general energy minimization. It is unknown that whether a constant ratio approximation exists for planar 2-label problems in general.

\textbf{Subclass problems}  (Section~\ref{sec:specialcases}). Special cases for some energy minimization algorithms relevant to computer vision are known to be tractable. However, detailed complexity analysis of these algorithms is patchy and spread across numerous papers.  In Section~\ref{sec:specialcases}, we classify the complexity of these subclass problems and illustrate some of their connections.  Such an analysis can help computer vision researchers become acquainted with existing complexity results relevant to energy minimization and can aid in selecting an appropriate model for an application or in designing new algorithms.

\subsection{Related Work}\label{sec:related}

%

Much of the work on complexity in computer vision has focused on experimental or empirical comparison of inference methods, including influential studies on choosing the best optimization techniques for specific classes of energy minimization problems~\cite{Szeliski08-PAMI,kappes2015comparative} and the PASCAL Probabilistic Inference Challenge, which focused on the more general context of inference in graphical models~\cite{PIC}. In contrast, our work focuses on theoretical computational complexity, rather than experimental analysis.

On the theoretical side, the NP-hardness of certain energy minimization problems is well studied. It has been shown that 2-label energy minimization is, in general, NP-hard, but it can be in PO if it is submodular~\cite{Kolmogorov02:regular-pami} or outerplanar~\cite{Schraudolph-10}. For multi-label problems, the NP-hardness was proven by reduction from the NP-hard multi-way cut problem~\cite{boykov2001approximate}. These results, however, say nothing about the complexity of {\em approximating} the global optimum for the intractable cases. The complexity involving approximation has been studied for classical combinatorial problems, such as MAX-CUT and MAX-2SAT, which are known to be APX-complete~\cite{Papadimitriou-91}. QPBO generalizes such problems and is therefore APX-hard. This leaves a possibility that QPBO may be in APX, \ie, approximable within a constant factor.


Energy minimization is often used to solve MAP inference for undirected graphical models. In contrast to scarce results for energy minimization and undirected graphical models, researchers have more extensively studied the computational complexity of approximating the MAP solution for {\em Bayesian networks}, also known as {\em directed graphical models}~\cite{kwisthout2015tree}. Abdelbar and Hedetniemi first proved the NP-hardness for approximating the MAP assignment of directed graphical models in the value of probability, \ie, finding $x$ such that
\begin{align}\label{p-approx-ratio}
\frac{p(x^*)}{p(x)} \leq r(n)
\end{align}
with a constant or polynomial ratio $r(n) \geq 1$ is NP-hard and showing that this problem is poly-APX-hard~\cite{Abdelbar-98}. 
The probability approximation ratio is closest to the energy ratio used in our work, but other approximation measures have also been studied.  \citeauthor{Kwisthout-11} showed the NP-hardness for approximating MAPs with the measure of additive value\mbox{-,} structure-, and rank-approximation~\cite{Kwisthout-11,Kwisthout-13,kwisthout2015tree}.
He also investigated the hardness of expectation-approximation of MAP and found that no randomized algorithm can expectation-approximate MAP in polynomial time with a bounded margin of error unless NP $\subseteq$ BPP, an assumption that is highly unlikely to be true~\cite{kwisthout2015tree}.


Unfortunately, the complexity results for directed models do not readily transfer to undirected models and vice versa. In directed and undirected models, the graphs represent different conditional independence relations, thus the underlying family of probability distributions encoded by these two models is distinct, as detailed in 
\ifdefined\NOAPPENDIX
    Appendix B.
\else
    \Section{sec:BayesNet}.
\fi
However, one can ask similar questions on the hardness of undirected models in terms of various approximation measures. In this work, we answer two questions, ``How hard is it to approximate the MAP inference in the ratio of energy (log probability) and the ratio of probability?'' The complexity of structure-, rank-, and expectation-approximation remain open questions for energy minimization.

\section{Definitions and Notation} \label{sec:prelim}

There are at least two different sets of definitions of what is considered an NP optimization problem~\cite{orponen1987approximation, ausiello1999complexity}. Here, we follow the notation of Ausiello et al~\cite{ausiello1999complexity} and restate the definitions needed for us to state and prove our theorems in Sections~\ref{sec:gencase} and \ref{sec:plcase} with our explanation of their relevance to our proofs.

\begin{definition}[Optimization Problem, \cite{ausiello1999complexity} Def. 1.16]An {\em optimization problem} $\P$ is characterized by a quadruple $(\I,\S,m,\rm{goal})$ where
\begin{enumerate}
\item $\I$ is the set of instances of $\P$.
\item $\S$ is a function that associates to any input instance $x\in\I$ the set of {\em feasible solutions} of $x$.
\item $m$ is the {\em measure} function, defined for pairs $(x,y)$ such that $x\in\I$ and $y\in\S(x)$. For every such pair $(x,y)$, $m(x,y)$ provides a  positive integer.
\item $\rm{goal} \in \{\min, \max\}$.


\end{enumerate}
\end{definition}
\noindent
Notice the assumption that the cost is positive, and, in particular, it cannot be zero.


\begin{definition}[Class NPO, \cite{ausiello1999complexity} Def 1.17] An optimization problem $\P = (\I,\S,m, \rm{goal})$ belongs to the class of NP optimization (NPO) problems if the following hold:
\begin{enumerate}
\item The set of instances $\I$ is recognizable in polynomial time.
\item There exists a polynomial $q$ such that given an instance $x\in\I$, for any $y\in \S(x)$, $|y| < q(x)$ and, besides, for any $y$ such that $|y| < q(x)$, it is decidable in polynomial time whether $y\in\S(x)$.
\item The measure function $m$ is computable in polynomial time.
\end{enumerate}
\end{definition}

\begin{definition}[Class PO, \cite{ausiello1999complexity} Def 1.18] An optimization problem $\P$ belongs to the class of PO if it is in NPO and there exists a polynomial-time algorithm that, for any instance $x\in\I$, returns an optimal solution $y\in\S^*(x)$, together with its value $m^*(x)$.
\end{definition}

For intractable problems, it may be acceptable to seek an approximate solution that is sufficiently close to optimal.

\begin{definition}[Approximation Algorithm, \cite{ausiello1999complexity} Def. 3.1]Given an optimization problem $\P = (\I,\S,m,\rm{goal})$ an algorithm $\A$ is an {\em approximation algorithm} for $\P$ if, for any given instance $x\in\I$, it returns an {\em approximate solution}, that is a feasible solution $\A(x) \in \S(x)$.
\end{definition}

\begin{definition}[Performance Ratio, \cite{ausiello1999complexity}, Def. 3.6] \label{def:ratio} Given an optimization problem $\P$, for any instance $x$ of $\P$ and for any feasible solution $y\in\S(x)$, the {\em performance ratio}, {\em approximation ratio} or {\em approximation factor} of $y$ with respect to $x$ is defined as
\begin{align}
R(x,y) = \max\Big\{\frac{m(x,y)}{m^*(x)}, \frac{m^*(x)}{m(x,y)}\Big\},
\end{align}
where $m^*(x)$ is the measure of the optimal solution for the instance $x$.
\end{definition}
Since $m^*(x)$ is a positive integer, the performance ratio is well-defined. It is a rational number in $[1,\infty)$.
Notice that from this definition, it follows that if finding a feasible solution, \eg $y\in\S(x)$, is an NP-hard decision problem, then there exists no polynomial-time approximation algorithm for $\P$, irrespective of the kind of performance evaluation that one could possibly mean. 

\begin{definition}[$r(n)$-approximation, \cite{ausiello1999complexity}, Def. 8.1]Given an optimization problem $\P$ in NPO, an approximation algorithm $\A$ for $\P$, and a function $r\colon \Natural \to (1,\infty)$, we say that $\A$ is an {\em $r(n)$-approximate} algorithm for $\P$ if, for any instance $x$ of $\P$ such that $\S(x) \neq \emptyset$, the performance ratio of the feasible solution $\A(x)$ with respect to $x$ verifies the following inequality:
\begin{align}
R(x,\A(x)) \leq r(|x|).
\end{align}
\end{definition}

\begin{definition}[$F$-APX, \cite{ausiello1999complexity}, Def. 8.2]\label{def:F-APX} Given a class of functions $F$, $F$-APX is the class of all NPO problems $\P$ such that, for some function $r\in F$, there exists a polynomial-time $r(n)$-approximate algorithm for $\P$.
\end{definition}

The class of constant functions for $F$ yields the complexity class APX. Together with logarithmic, polynomial, and exponential functions applied in~\cref{def:F-APX}, the following {\em complexity axis} is established:
\begin{align}\notag                                           
\mbox{PO $\subseteq$ APX $\subseteq$ log-APX $\subseteq$ poly-APX $\subseteq$ exp-APX $\subseteq$ NPO}.
\end{align}

\noindent

Since the measure $m$ needs to be computable in polynomial time for NPO problems, the largest measure and thus the largest performance ratio is an exponential function. But exp-APX is not equal to NPO (assuming P $\neq$ NP) because NPO contains problems whose feasible solutions cannot be found in polynomial time.  For an energy minimization problem, any label assignment is a feasible solution, implying that all energy minimization problems are in exp-APX.


The standard approach for proofs in complexity theory is to perform a reduction from a known NP-complete problem.  Unfortunately, the most common polynomial-time reductions ignore the quality of the solution in the approximated case. For example, it is shown that any energy minimization problem can be reduced to a factor 2 approximable Potts model \cite{prusa2015hard}, however the reduction is not approximation preserving and is unable to show the hardness of general energy minimization in terms of approximation. Therefore, it is necessary to use an approximation preserving (AP) reduction to classify NPO problems that are not in PO, for which only the approximation algorithms are tractable.  AP-preserving reductions preserve the approximation ratio in a linear fashion, and thus preserve the membership in these complexity classes.  Formally,

\begin{definition}[AP-reduction, \cite{ausiello1999complexity} Def. 8.3]\label{def:AP-red}
Let $\P_1$ and $\P_2$ be two problems in NPO. $\P_1$ is said to be AP-{\em reducible} to $\P_2$, in symbols $\P_1 \leqAP \P_2$, if two functions $\pi$ and $\sigma$ and a positive constant $\alpha$ exist such that \footnote{The complete definition contains a rational $r$ for the the two mappings ($\pi$ and $\sigma$) and it is omitted here for simplicity.}:
\begin{enumerate}
\item For any instance $x\in \I_1$, $\pi(x) \in \I_2$.
\item For any instance $x\in \I_1$, if $S_1(x) \neq \emptyset$ then $S_2(\pi(x)) \neq \emptyset$.
\item For any instance $x\in \I_1$ and for any $y \in S_2(\pi(x))$, $\sigma(x, y) \in S_1(x)$.
\item $\pi$ and $\sigma$ are computable by algorithms whose running time is polynomial.
\item For any instance $x\in \I_1$, for any rational $r > 1$, and for any $y \in S_2(\pi(x))$,
\begin{align} \label{eq:AP-red}
R_2(\pi(x),y) \leq r \quad \text{implies} \\
R_1(x, \sigma(x, y)) \leq 1 + \alpha(r-1).
\end{align}
\end{enumerate}
\end{definition}

AP-reduction is the formal definition of the term `as hard as' used in this paper unless otherwise specified. It defines a partial order among optimization problems. With respect to this relationship, we can formally define the subclass containing the hardest problems in a complexity class:

\begin{definition}[$\C$-hard and $\C$-complete,  \cite{ausiello1999complexity} Def. 8.5]\label{def:complete} Given a class $\C$ of NPO problems, a problem $\P$ is $\C$-hard if, for any $\P' \in \C$, $\P' \leqAP \P$. A $\C$-hard problem is $\C$-complete if it belongs to $\C$.
\end{definition}

Intuitively, a complexity class $\C$ specifies the upper bound on the hardness of the problems within, $\C$-hard specifies the lower bound, and $\C$-complete exactly specifies the hardness.
\section{Inapproximability for the General Case} \label{sec:gencase}

In this section, we show that QPBO and general energy minimization are inapproximable by proving they are exp-APX-complete. As previously mentioned, it is already known that these problems are NP-hard \cite{BorosHammer02}, but it was previously unknown whether useful approximation guarantees were possible in the general case.  
The formal statement of QPBO as an optimization problem is as follows:
\begin{problem}{\bf QPBO}
\begin{itemize}
\item[\sc instance:] A pseudo-Boolean function $f \colon \Bool^\V \to \Natural \colon$
\begin{align}
f(x) = \sum_{v\in\V} f_u(x_u) + \sum_{u,v\in\V} f_{uv}(x_u,x_v),
\end{align}
given by the collection of unary terms $f_u$ and pairwise terms $f_{uv}$.
\item[\sc solution:] Assignment of variables $x \in \Bool^\V$.
\item[\sc measure:] $\min f(x) > 0$.
\end{itemize}
\end{problem}

\begin{restatable}{theorem}{Tmain}\label{th:main}
QPBO is exp-APX-complete.
\end{restatable}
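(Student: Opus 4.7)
The plan is to establish the two halves of completeness separately.

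Membership in exp-APX is essentially the remark already contained in \cref{sec:prelim}: the measure $f(x)$ is a positive integer computable in polynomial time from unary and pairwise tables of polynomially bounded bit-length, so every feasible value is at most exponential in the input size, and returning any fixed assignment is a polynomial-time $r(n)$-approximation for some exponential $r$.

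For exp-APX-hardness, I would exhibit an AP-reduction from a canonical exp-APX-complete source problem, namely Min Weighted SAT (MWSAT): a Boolean CNF formula $\phi = C_1 \wedge \ldots \wedge C_m$ together with positive integer weights $w_i$ on the variables, whose feasible solutions are the satisfying assignments and whose measure is $\sum_{i : x_i = 1} w_i$. Its exp-APX-completeness is recorded in the catalogue of Ausiello et al.~\cite{ausiello1999complexity}. The forward map $\pi$ builds a QPBO instance with one variable $y_i$ per original variable together with $O(1)$ auxiliary binary variables per clause. Unary terms carry the original weights $f_i(y_i) = w_i y_i$; each clause is replaced by a standard pairwise gadget whose minimum over the auxiliaries equals $0$ when the clause is satisfied under $y$ and equals a separating penalty $M := 1 + \sum_i w_i$ otherwise, with any residual cubic penalty reduced to pairwise form by a single auxiliary in the usual way. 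A global offset of $1$ keeps $f$ strictly positive. The backward map $\sigma$ outputs the $y$-coordinates of the QPBO solution if they satisfy $\phi$, and otherwise falls back on a fixed satisfying assignment produced once during preprocessing (if no satisfying assignment exists, MWSAT is infeasible and the AP-reduction conditions hold vacuously).

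The AP-reduction inequality then holds with $\alpha = 1$ by a short case split on $r$. Because $M$ strictly exceeds the entire unary budget, $m^*_2(\pi(x)) = m^*_1(x)$, so any QPBO solution $y$ with ratio $R_2 \leq r$ for $r < M/m^*_1$ must satisfy every clause; this yields $\sigma(x,y) = y$ and $R_1(x, y) \leq r$. For $r \geq M/m^*_1$ the fallback assignment already has weight at most $M$, so $R_1 \leq M/m^*_1 \leq r = 1 + (r-1)$, again bounded by $1 + \alpha(r-1)$.

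The hard part will be the clause gadget: it must be pure pairwise on Boolean variables with $O(1)$ auxiliaries, must realise the exact minimum-energy profile $\{0, M\}$ after auxiliary minimisation, and must keep every coefficient a positive integer of polynomial bit-length. These requirements force the separating weight $M$ to scale with $\sum_i w_i$ rather than being a fixed constant, which is precisely what makes the attainable performance ratio exponential and thus locates the hardness at the exp-APX level rather than at APX.
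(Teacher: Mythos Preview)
Your membership argument and your gadget-plus-quadratisation strategy are both fine and essentially mirror the paper. The gap is in your choice of source problem and, as a direct consequence, in your backward map~$\sigma$. You define MWSAT so that \emph{feasible solutions are the satisfying assignments}, and you let $\sigma$ ``fall back on a fixed satisfying assignment produced once during preprocessing'' whenever the QPBO output fails to satisfy $\phi$. But producing such a fallback is exactly the SAT search problem, which is NP-hard; so $\sigma$ is not polynomial-time and condition~4 of \cref{def:AP-red} fails. This is not a cosmetic issue: if your AP-reduction worked, it would place QPBO above an NPO-complete problem (MWSAT with feasibility equal to satisfiability is NPO-complete in the Ausiello~et~al.\ framework, not exp-APX-complete, precisely because no polynomial-time algorithm can return a feasible solution), forcing $\text{NPO}=\text{exp-APX}$ and hence $\text{P}=\text{NP}$. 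In other words, no valid AP-reduction from your MWSAT to QPBO can exist.

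The paper sidesteps this by reducing from W3SAT-\emph{triv}, which augments the feasible set with the all-true assignment. That single change is what puts the source problem inside exp-APX and simultaneously gives $\sigma$ a trivial polynomial-time fallback: when the QPBO energy is at least $M$, return the all-ones vector. Everything else in the paper's proof---encoding each 3-clause as a cubic penalty $M(1-C)$, quadratising via a single auxiliary \`a la Ishikawa, and verifying that the quadratised gadget stays nonnegative for \emph{every} assignment of the auxiliary (not just the minimising one), so that $m_1\le m_2$ holds unconditionally---matches your sketch. Your case split on $r$ is then unnecessary: once $m_1\le m_2$ and $m_1^{*}=m_2^{*}$ (or $m_1^{*}=M\le m_2^{*}$ when $\phi$ is unsatisfiable), the ratio inequality follows for all $r$ with $\alpha=1$.
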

\begin{proofsketch} (Full proof in
\ifdefined\NOAPPENDIX
    Appendix A).
\else
    \cref{sec:formalproof}).
\fi
\begin{enumerate}
\item We observe that W3SAT-triv is known to be exp-APX-complete \cite{ausiello1999complexity}. W3SAT-triv is a 3-SAT problem with weights on the variables and an artificial, trivial solution.
\item Each 3-clause in the conjunctive normal form can be represented as a polynomial consisting of three binary variables. Together with representing the weights with the unary terms, we arrive at a cubic Boolean minimization problem.
\item We use the method of~\cite{ishikawa2011transformation} to transform the cubic Boolean problem into a quadratic one, with polynomially many additional variables, which is an instance of QPBO.
\item Together with an inverse mapping $\sigma$ that we define, the above transformation defines an AP-reduction from W3SAT-triv to QPBO, \ie W3SAT-triv $\leqAP$ QPBO. This proves that QPBO is exp-APX-hard.
\item We observe that all energy minimization problems are in exp-APX and thereby conclude that QPBO is exp-APX-complete.
\end{enumerate}
\end{proofsketch}

This inapproximability result can be generalized to more than two labels.
\begin{restatable}{corollary}{Cgen}\label{C:gen}
$k$-label energy minimization is exp-APX-complete for $k \geq 2$.
\end{restatable}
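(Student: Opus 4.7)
The case $k = 2$ is exactly \Theorem{th:main}, so I focus on $k \geq 3$. The plan has two parts: show (i) $k$-label energy minimization lies in exp-APX, and (ii) QPBO $\leqAP$ $k$-label energy minimization. Part (i) follows the general observation made in \cref{sec:prelim}: the all-zero assignment is a feasible solution computable in polynomial time, the measure has polynomially-bounded binary representation, and hence every performance ratio is at most exponential in $|x|$. Part (ii) is the substantive step.

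For the reduction, given a QPBO instance $x$ on graph $(\V, \E)$, the map $\pi$ produces a $k$-label instance on the same graph with the identical costs on labels $\{0,1\}$ and a uniform large penalty $M$ for any term involving a label $i \geq 2$:
\begin{align*}
f'_u(i) = f_u(i), \; f'_{uv}(i,j) = f_{uv}(i,j) \quad &\text{for } i,j \in \{0,1\}, \\
f'_u(i) = M, \; f'_{uv}(i,j) = M \quad &\text{otherwise,}
\end{align*}
where $M = U + 1$ and $U = \sum_{u} \max_{i \in \{0,1\}} f_u(i) + \sum_{uv} \max_{i,j \in \{0,1\}} f_{uv}(i,j)$ is a polynomial-time upper bound on any 2-label assignment cost. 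The inverse map $\sigma(x, y)$ returns the 2-label assignment obtained from a $k$-label solution $y$ by replacing every $y_u \geq 2$ with $0$. Both maps run in polynomial time, and all costs remain positive integers.

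Correctness rests on the equality $m^*(\pi(x)) = m^*(x)$: any $k$-label solution using some label $\geq 2$ pays at least $M = U + 1 > U \geq m^*(x)$, so the $k$-label optimum uses only $\{0, 1\}$ and coincides with the QPBO optimum. For any $y \in \S(\pi(x))$, either $y$ uses only $\{0,1\}$ (whence $R_1(x, \sigma(x,y)) = R_2(\pi(x), y)$ trivially), or $y$ uses some extra label, in which case $R_2(\pi(x), y) \geq (U+1)/m^*(x)$ while $R_1(x, \sigma(x,y)) \leq U/m^*(x)$, and a one-line calculation yields $R_1 \leq R_2$. Hence $R_1 \leq 1 + \alpha(R_2 - 1)$ holds with the universal constant $\alpha = 1$, confirming an AP-reduction. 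The main subtlety is calibrating $M$: too small and the optimum spuriously uses extra labels, breaking $m^*(\pi(x)) = m^*(x)$; too large and a $k$-label solution with modest ratio $R_2$ could still force $\sigma$ into a 2-label fallback whose ratio exceeds any fixed $1 + \alpha(R_2 - 1)$. The choice $M = U + 1$ sits exactly at the threshold that makes both conditions work simultaneously, which is the only delicate point of the argument.
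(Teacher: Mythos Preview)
Your proposal is correct and follows essentially the same approach as the paper: penalize every term involving an extra label with a large constant $M$, keep the original costs on $\{0,1\}$, and verify the AP-reduction with $\alpha = 1$ via $m_1^* = m_2^*$ and $m_1 \leq m_2$. The only cosmetic differences are the precise value of $M$ (the paper takes $\sum|f_u| + \sum|f_{uv}| + 1$) and the definition of $\sigma$ (the paper falls back to a fixed all-first-label assignment whenever $m_2 \geq M$ rather than projecting each extra label to $0$); incidentally, your caveat that $M$ must not be ``too large'' is unnecessary, since with your projection-based $\sigma$ any $M > U$ already gives $R_1 \leq R_2$.
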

%
\begin{proofsketch}
(Full proof in
\ifdefined\NOAPPENDIX
    Appendix A).
\else
    \cref{sec:formalproof}).
\fi
This theorem is proved by showing QPBO $ \leqAP$ $k$-label energy minimization for $k \geq 2$.
\end{proofsketch}
We show in 
\ifdefined\NOAPPENDIX
    Corollary B.1
\else
    \cref{C:prob-approx}
\fi
the inapproximability in energy (log probability) transfer to probability in Equation~\cref{p-approx-ratio} as well.

Taken together, this theorem and its corollaries form a very strong inapproximability result for general energy minimization \footnote{These results automatically generalize to higher order cases as they subsume the pairwise cases discussed here.}. They imply not only NP-hardness, but also that there is no algorithm that can approximate general energy minimization with two or more labels with an approximation ratio better than some exponential function in the input size. In other words, any approximation algorithm of the general energy minimization problem can perform arbitrarily badly, and it would be pointless to try to prove a bound on the approximation ratio for existing approximation algorithms for the general case.  While this conclusion is disappointing, these results serve as a clarification of grounds and guidance for model selection and algorithm design. Instead of counting on an oracle that solves the energy minimization problem, researchers should put efforts into selecting the proper formulation, trading off expressiveness for tractability.

\section{Inapproximability for the Planar Case} \label{sec:plcase}

\begin{figure}[b]
\begin{center}
   \includegraphics[width=1\linewidth]{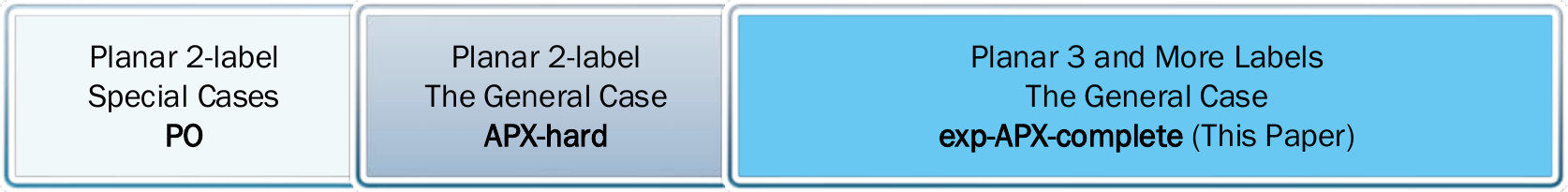}
\end{center}
   \caption{Complexity for planar energy minimization problems. The ``general case'' implies no restrictions on the pairwise interaction type.  This paper shows that the third category of problems is not efficiently approximable.}
\label{fig:planarcomp}
\end{figure}

Efficient algorithms for energy minimization have been found for special cases of 2-label planar graphs.  Examples include planar 2-label problems without unary terms and outerplanar 2-label problems (i.e., the graph structure remains planar after connecting to a common node)~\cite{Schraudolph-10}. 
Grid structures over image pixels naturally give rise to planar graphs in computer vision.  Given their frequency of use in this domain, it is natural to consider the complexity of more general cases involving planar graphs. Figure~\ref{fig:planarcomp} visualizes the current state of knowledge  of the complexity of energy minimization problems on planar graphs.  In this section, we prove that for the case of planar graphs with three or more labels, energy minimization is exp-APX-complete. This result is important because it significantly reduces the space of potentially efficient algorithms on planar graphs. The existence of constant ratio approximation for planar 2-label problems in general remains an open question \footnote{The planar 2-label problem in general is APX-hard, since it subsumes the APX problem planar vertex cover \cite{Bar-Yehuda:1982:AVC:800070.802205}.}.

\begin{restatable}{theorem}{Tplanar}\label{th:planar}
Planar 3-label energy minimization is exp-APX-complete.
\end{restatable}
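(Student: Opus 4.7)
The plan is to handle exp-APX membership and exp-APX-hardness separately. Membership is immediate: every energy minimization problem is in exp-APX (as observed in \cref{sec:gencase}), and this class is closed under restriction of the admissible graph structure, so planar 3-label energy minimization inherits membership. The substantive work is therefore to exhibit an AP-reduction from (general, non-planar) 3-label energy minimization, which is exp-APX-hard by \cref{C:gen}, to planar 3-label energy minimization.

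For the reduction $\pi$, given an arbitrary instance with graph $G=(\V,\E)$ and terms $\{f_u\}_{u\in\V}$, $\{f_{uv}\}_{(u,v)\in\E}$, I would first fix an arbitrary drawing of $G$ in the plane. The number of pairwise edge crossings is at most $\binom{|\E|}{2}$, hence polynomial in the input size. I then replace each crossing locally by a small planar \emph{crossover gadget}: the two crossing edges $(u,v)$ and $(s,t)$ are each subdivided at the crossing point, yielding four half-edges incident to a constant-size planar subgraph on a fixed set of auxiliary variables. The gadget is designed so that, in any assignment of energy below a large threshold $M$, the auxiliary variables force the labels on the two halves of $(u,v)$ to agree and likewise for $(s,t)$, while faithfully reproducing the original pairwise potential $f_{uv}(\cdot,\cdot)$ (resp.\ $f_{st}$) between the surviving endpoints. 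The penalty $M$ is chosen larger than the total weight of all original terms; since all weights are positive integers and the instance is polynomial-size, $M$ can itself be encoded in polynomial space.

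The key step, and the main obstacle, is constructing this crossover gadget using only three labels. With two labels, such a gadget is known to be impossible for generic potentials because there is no planar \emph{self-avoiding} way to propagate a bit across a crossing point. The third label creates a ``bypass'' state: for each of the nine label pairs $(x_u,x_s)\in\L^2$ that the two crossing edges may carry, I would design a planar constellation of auxiliary nodes (connected via Potts-like equality terms with cost $M$) that admits a unique minimum-energy configuration transmitting $x_u$ to the other end of the first edge and $x_s$ to the other end of the second, with all other configurations costing at least $M$ more. A natural construction uses a constant number of auxiliary variables around each crossing, arranged as two interleaved ``switches'' that exploit the extra label as an intermediate relay; the correctness table for the nine label pairs is checked mechanically.

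For the inverse mapping $\sigma$, given a feasible $y$ of the planar instance I simply restrict $y$ to the original variables $\V$. To verify the AP property (\cref{def:AP-red}, item 5), I would argue as follows. By construction, for any consistent assignment the energies of the planar and original instances coincide up to a fixed additive offset equal to the number of crossings times the gadgets' built-in baseline, which can be subsumed by a constant rescaling since measures are positive integers. If $y$ achieves ratio $R_2\le r$ on the planar instance and $r$ is small enough that the planar energy is below the ``inconsistency threshold'' $M$, all gadgets are consistent and $\sigma(x,y)$ has exactly the same energy on the original instance as $y$ on the planar one, giving $R_1\le r$; for the remaining case where $r$ is so large that $y$ is already inconsistent, one chooses a trivial fallback in $\sigma$ and absorbs the loss into the constant $\alpha$. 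Hence $R_1(x,\sigma(x,y))\le 1+\alpha(r-1)$ for a suitable universal $\alpha$, completing the AP-reduction and therefore the exp-APX-completeness proof.
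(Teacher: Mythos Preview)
Your high-level plan coincides with the paper's: show membership trivially, then AP-reduce general 3-label energy minimization (exp-APX-complete by \cref{C:gen}) to the planar case by locally replacing each edge crossing with a planar gadget built from auxiliary 3-label variables and large penalties $M$. The paper's $\sigma$ is exactly your ``restrict to $\V$, fall back to a fixed labeling if $m_2\ge M$'' map, and the paper's crossing count is likewise $O(|\E|^2)$.

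There are, however, two genuine gaps in your proposal.

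\textbf{The gadget is not constructed.} You correctly identify the crossover gadget as ``the key step, and the main obstacle,'' but then only list its desiderata and assert that a ``natural construction'' with the third label as a relay exists and can be ``checked mechanically.'' This is precisely the nontrivial content of the theorem, and your sketch does not amount to a construction. The paper's gadget is not a single crossover block; it first \emph{splits} each 3-label endpoint into a pair of simulated 2-label nodes (by forbidding one label via a huge unary cost), and then uses a separate \textsc{UncrossCopy} gadget that swaps two \emph{2-label} signals across each other planarly. Four applications of each gadget replace one crossing. The point is that uncrossing two 3-label wires directly is what is hard; reducing to uncrossing 2-label wires (inside a 3-label alphabet) is the trick, and nothing in your description suggests it.

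\textbf{The additive-offset argument breaks the AP property.} You allow your gadget to contribute a nonzero ``built-in baseline'' per crossing and claim the resulting additive offset ``can be subsumed by a constant rescaling.'' It cannot. If $m_2=m_1+C$ and $m_2^*=m_1^*+C$ with $C$ equal to (baseline)$\times$(number of crossings), then $R_2\le r$ yields only $R_1\le r+(r-1)\,C/m_1^*$. Since $m_1^*$ may be $1$ while $C$ grows polynomially with the instance, no universal $\alpha$ satisfies \cref{def:AP-red}. The paper avoids this entirely: its gadgets have \emph{zero} cost in every consistent configuration, so $m_1^*=m_2^*$ exactly, and the fallback guarantees $m_1<M\le m_2$ in the inconsistent case, hence $m_1\le m_2$ always and $R_1\le R_2\le r$ with $\alpha=1$. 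Your ``absorb the loss into $\alpha$'' for the inconsistent case is similarly unjustified; the clean inequality $m_1\le m_2$ (via the $M$-threshold fallback) is what makes that case free, not any slack in $\alpha$.

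In short: the skeleton is right, but you must (i) actually exhibit a zero-baseline planar crossover gadget for 3 labels, and (ii) replace the offset/rescaling talk with the exact equalities $m_1^*=m_2^*$ and $m_1\le m_2$.
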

\begin{proofsketch}
(Full proof in
\ifdefined\NOAPPENDIX
    Appendix A).
\else
    \cref{sec:formalproof}).
\fi
\begin{enumerate}
    \item We construct elementary gadgets to reduce any 3-label energy minimization problem to a planar one with polynomially many auxiliary nodes.
    \item Together with an inverse mapping $\sigma$ that we define, the above construction defines an AP-reduction, \ie, 3-label energy minimization $\leqAP$ planar 3-label energy minimization.
    \item Since 3-label energy minimization is exp-APX-complete (\cref{C:gen}) and all energy minimization problems are in exp-APX, we thereby conclude that planar 3-label energy minimization is exp-APX-complete.
\end{enumerate}
\end{proofsketch}
\begin{restatable}{corollary}{Cplanark}\label{C:planark}
Planar $k$-label energy minimization is exp-APX-complete, for $k \geq 3$.
\end{restatable}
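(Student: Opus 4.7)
The plan is to show that planar 3-label energy minimization AP-reduces to planar $k$-label energy minimization for every $k \geq 3$, then combine this with \cref{th:planar} and the fact, noted in \cref{sec:prelim}, that every energy minimization problem lies in exp-APX, to conclude exp-APX-completeness.

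For the forward reduction $\pi$, I would keep the graph $\G=(\V,\E)$ unchanged (so planarity is trivially preserved, with no auxiliary nodes or edges added) and extend each unary table by setting $f_u^{(k)}(i)=f_u(i)$ for $i\in\{0,1,2\}$ and $f_u^{(k)}(i)=M$ for $i\geq 3$, and each pairwise table by matching the original values on $\{0,1,2\}^2$ and filling the remaining entries with any small positive integer. Let $T := 1 + \sum_u \max_i f_u(i) + \sum_{uv} \max_{ij} f_{uv}(i,j)$, a polynomial-time computable upper bound on the cost of any 3-label labeling, and set the penalty $M := T^2$. The inverse mapping $\sigma$ is the obvious clip map: $\sigma(x,y)_u = y_u$ if $y_u\in\{0,1,2\}$ and $\sigma(x,y)_u = 0$ otherwise, producing a valid 3-label labeling of $x$.

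To verify the AP-reduction condition with $\alpha=1$, I would first observe that since $M > T$, no optimal $k$-label labeling uses a dummy label (any all-$\{0,1,2\}$ labeling costs at most $T$, while any labeling using a dummy label costs at least $M$), so $m_2^* = m_1^*$. If $y$ uses no dummy label, then $\sigma(x,y)=y$ and $R_1(x,\sigma(x,y)) = R_2(\pi(x),y)\leq r$. If $y$ uses at least one dummy label, then $m_2(y)\geq M$, so $R_2(\pi(x),y)\geq M/m_1^* \geq M/T = T$, while $R_1(x,\sigma(x,y)) \leq T/m_1^* \leq T$ (using $m_1^* \geq 1$); thus $R_1 \leq T \leq R_2 \leq r$. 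In either case $R_1 \leq r = 1 + 1\cdot(r-1)$, giving the required AP-reduction, after which hardness propagates from the planar 3-label case and containment in exp-APX seals completeness.

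The main (and essentially only) obstacle is sizing the penalty $M$ correctly: it must be large enough that any bounded-ratio $k$-label labeling behaves like a 3-label labeling, so $\sigma$ preserves the approximation ratio, yet its bit-length must stay polynomial so $\pi$ runs in polynomial time. The quadratic choice $M = T^2$ handles both simultaneously, since it forces the dummy-label case into the trivially bounded regime $R_2 \geq T \geq R_1$ while $\log M = 2\log T$ remains polynomial in the input bit-length.
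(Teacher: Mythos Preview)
Your proposal is correct and follows essentially the same approach as the paper: the paper's proof is a single line observing that the reduction in \cref{C:gen} leaves the graph unchanged and hence preserves planarity, and you have simply re-derived that label-padding reduction explicitly. One minor caveat: your claim that any labeling using a dummy label has cost at least $M$ tacitly assumes all original unary and pairwise terms are non-negative; the paper's version of the construction sets the dummy pairwise entries to $M$ as well and takes $M$ to be one plus the sum of absolute values of all terms, which is what makes the bound $m_2\geq M$ (and hence $m_1\leq m_2$) hold even when individual terms may be negative --- adopting that choice in place of ``any small positive integer'' closes the gap with no change to the rest of your argument.
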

\begin{proofsketch}
(Full proof in
\ifdefined\NOAPPENDIX
    Appendix A).
\else
    \cref{sec:formalproof}).
\fi
This theorem is proved by showing planar 3-label energy minimization $ \leqAP$ planar $k$-label energy minimization, for $k \geq 3$. 
\end{proofsketch}
These theorems show that the restricted case of planar graphs with 3 or more labels is as hard as general case for energy minimization problems with the same inapproximable implications discussed in Section~\ref{sec:gencase}.

The most novel and useful aspect of the proof of Theorem~\ref{th:planar} is the planar reduction in Step 1. The reduction creates an equivalent planar representation to any non-planar 3-label graph.  That is, the graphs share the same optimal value.  The reduction applies elementary constructions or ``gadgets'' to uncross two intersecting edges. This process is repeated until all intersecting edges are uncrossed. Similar elementary constructions were used to study the complexity of the linear programming formulation of energy minimization problems~\cite{prusa2015universality,prusa2015hard}.
Our novel gadgets have three key properties {\em at the same time}: 1) they are able to uncross intersecting edges, 2) they work on non-relaxed problems, \ie, all indicator variables (or pseudomarginals to be formal) are integral; and 3) they can be applied repeatedly to build an AP-reduction.


%

%
The two gadgets used in our reduction are illustrated in Figure~\ref{fig:split}. A 3-label node can be encoded as a collection of 3 indicator variables with a one-hot constraint. In the figure, a solid colored circle denotes a 3-label node, and a solid colored rectangle denotes the equivalent node expressed with indicator variables (white circles).  For example, in Figure~\ref{fig:split}, $a=1$ corresponds to the blue node taking the first label value. The pairwise potentials (edges on the left part of the figures) can be viewed as edge costs between the indicator variables (black lines on the right), \eg, $f_{uv}(3, 2)$ is placed onto the edge between indicator $c$ and $e$ and is counted into the overall measure if and only if $c = e = 1$. In our gadgets, drawn edges represent zero cost while omitted edges represent positive infinity\footnote{A very large number will also serve the same purpose, \eg, take the sum of the absolute value of all energy terms and add 1. Therefore, we are not expanding the set of allowed energy terms to include $\infty$.}.  While the set of feasible solutions remains the same, the gadget encourages certain labeling relationships, which, if not satisfied, cause the overall measure to be infinity. Therefore, the encouraged relationships must be satisfied by any optimal solution. The two gadgets serve different purposes:

\textsc{Split} A 3-label node (blue) is split into two 2-label nodes (green). The shaded circle represents a label with a positive infinite unary cost and thus creates a simulated 2-label node. The encouraged relationships are
\begin{itemize}
    \item $a = 1 \Leftrightarrow d = 1 \text{ and } f = 1$.
    \item $b = 1 \Leftrightarrow g = 1$.
    \item $c = 1 \Leftrightarrow e = 1 \text{ and } f = 1$.
\end{itemize}
Thus $(d,f)$ encodes $a$, $(d,g)$ and $(e,g)$ both encode $b$ and $(e,f)$ encodes $c$.

\textsc{UncrossCopy} The values of two 2-label nodes are encouraged to be the same as their diagonal counterparts respectively (red to red, green to green) without crossing with each other. The orange nodes are intermediate nodes that pass on the values. All types of lines represent the same edge cost, which is 0. The color differences visualize the verification for each of the 4 possible states of two 2-label nodes.  For example, the cyan lines verify the case where the top-left (green) node takes the values (1, 0) and the top-right (red) node takes the value (0, 1).
It is clear that the encouraged solution is for the bottom-left (red) node and the bottom-right (green) node to take the value (0, 1) and (1, 0) respectively.

\begin{figure}[t]
\centering
\begin{tabular}{cc}
\begin{tabular}{c}%
\includegraphics[width=0.39\linewidth]{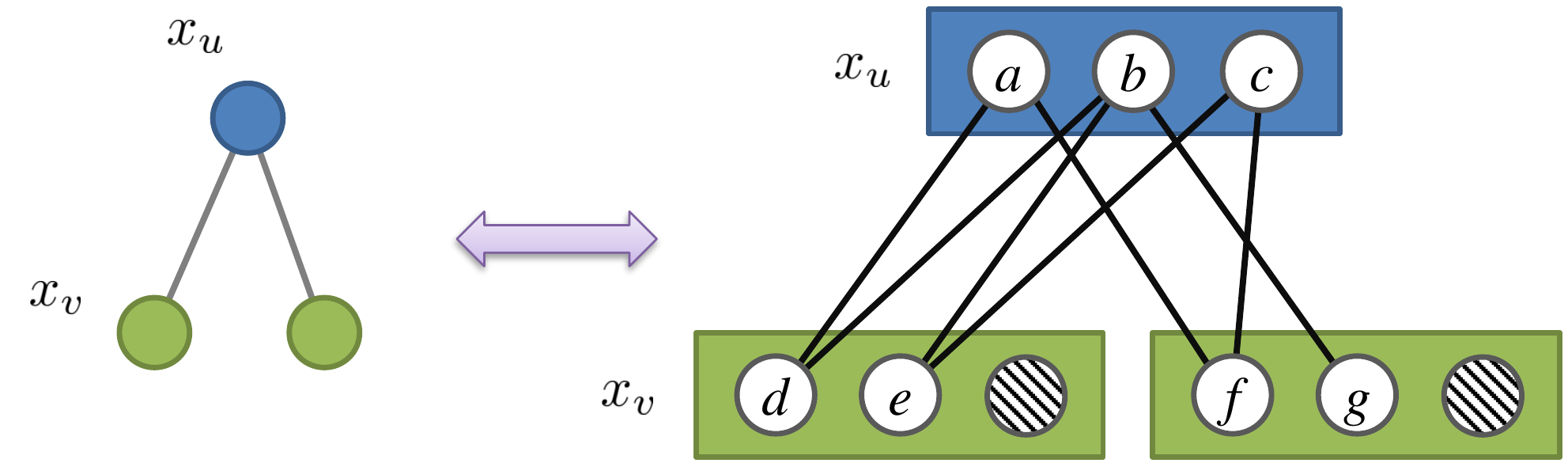}
\end{tabular}&\ \ \ \ \ \ 
\begin{tabular}{c}%
\includegraphics[width=0.39\linewidth]{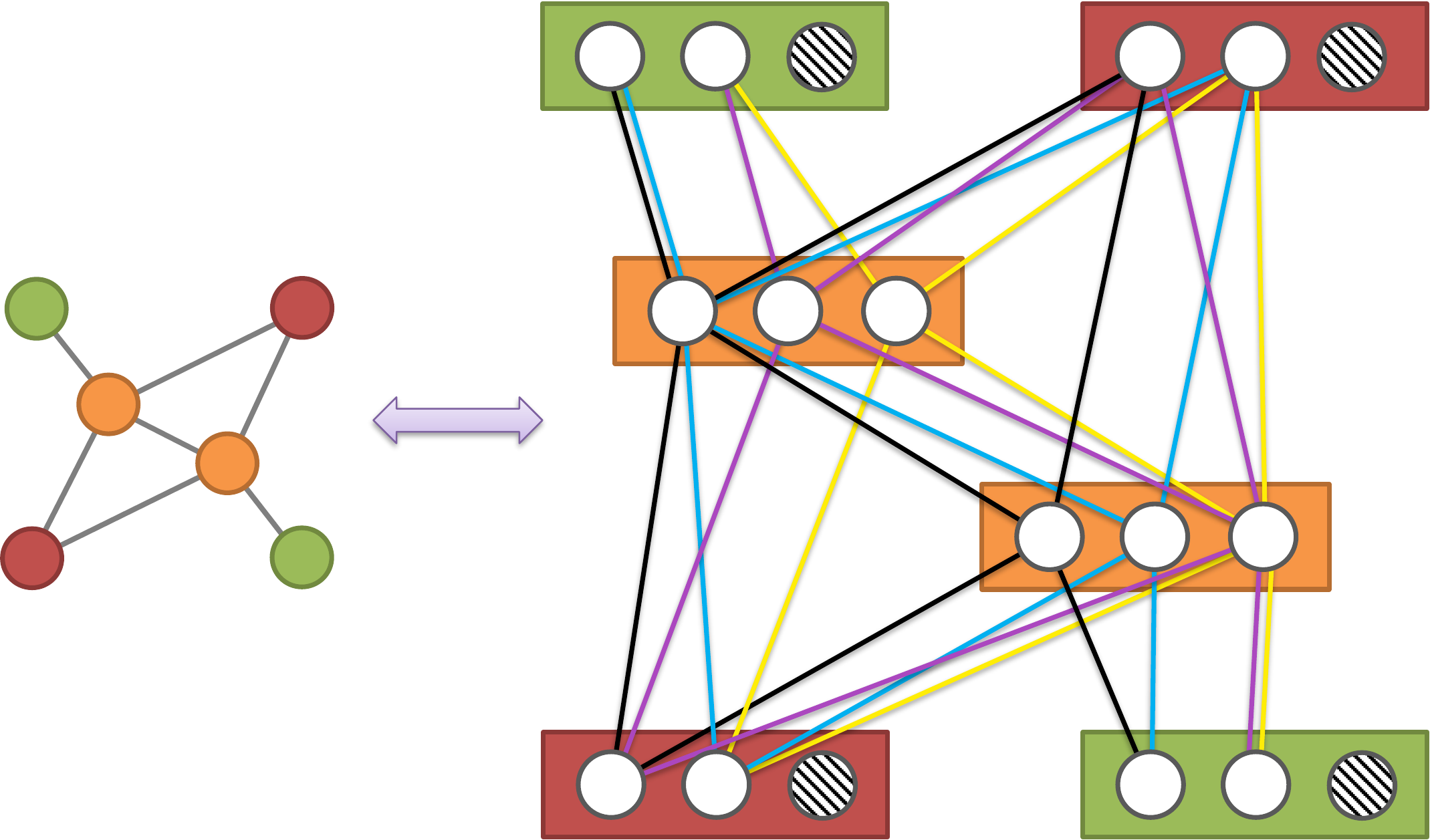}
\end{tabular}\\
{\sc Split} & {\sc UncrossCopy}
\end{tabular}
\caption{Gadgets to represent a 3-label variable as two 2-label variables ({\sc Split}) and to copy the values of two diagonal pairs of 2-label variables without edge crossing ({\sc UncrossCopy}).\label{fig:split}}
\end{figure}

\begin{figure}[b]
\begin{center}
   \includegraphics[trim={0 10.5cm 0 0}, clip, width=0.7\linewidth]{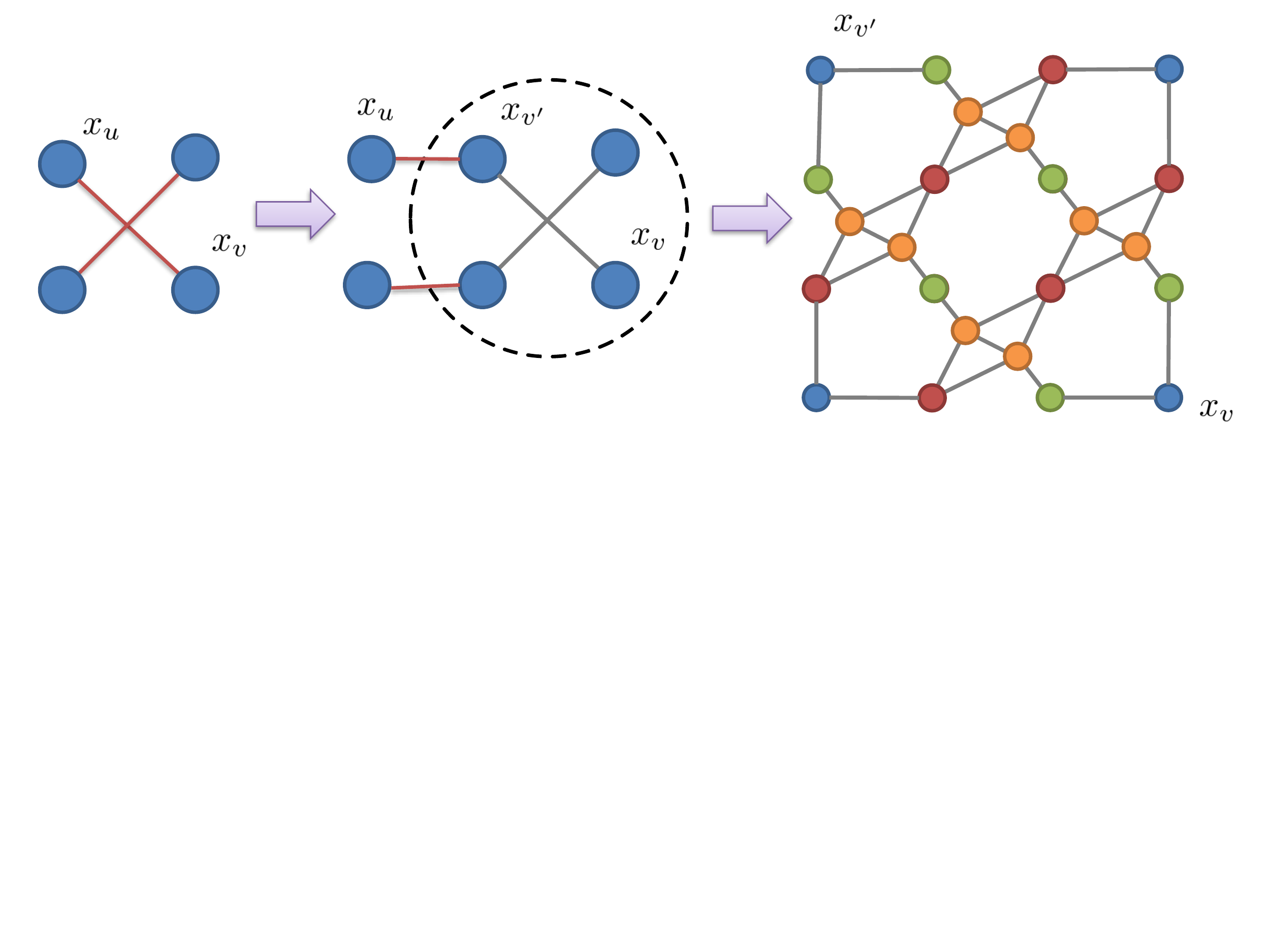}
\end{center}
\caption{Planar reduction for 3-label problems} \label{fig:planarreduct}
\end{figure}

These two gadgets can be used to uncross the intersecting edges of two pairs of 3-label nodes (Figure~\ref{fig:planarreduct}, left).  For a crossing edge ($x_u$, $x_v$), first a new 3-label node $x_{v'}$ is introduced preserving the same arbitrary interaction (red line) as before (Figure~\ref{fig:planarreduct}, middle). Then, the crossing edges (enclosed in the dotted circle) are uncrossed by applying {\sc Split} and {\sc UncrossCopy} four times (Figure~\ref{fig:planarreduct}, right).
Without loss of generality, we can assume that no more than two edges intersect at a common point except at their endpoints.  This process can be applied repeatedly at each edge crossing until there are no edge crossings left in the graph~\cite{prusa2015universality}.

\section{Complexity of Subclass Problems} \label{sec:specialcases}


In this section, we classify some of the special cases of energy minimization according to our complexity axis (Figure~\ref{fig:hardnessaxis}). This classification can be viewed as a reinterpretation of existing results from the literature into a unified framework.

\subsection{Class PO (Global Optimum)}

Polynomial time solvability may be achieved by considering two principal restrictions: those restricting the {\em structure} of the problem, \ie, the graph $G$, and those restricting the type of allowed interactions, \ie, functions $f_{uv}$.

\textbf{Structure Restrictions.} When $G$ is a chain, energy minimization
reduces to finding a shortest path in the trellis graph, which can be solved using a classical dynamic programming (DP) method known as the Viterbi algorithm~\cite{forney1973viterbi}.
The same DP principle applies to graphs of bounded treewidth.  Fixing all variables in a separator set decouples the problem into independent optimization problems. For treewidth 1, the separators are just individual vertices, and the problem is solved by a variant of DP~\cite{Pearl-88,SchlesingerHlavac2002}.
For larger treewidths, the respective optimization procedure is known as junction tree decomposition~\cite{Lauritzen96}. A loop is a simple example of a treewidth 2 problem. However, for a treewidth $k$ problem, the time complexity is exponential in $k$~\cite{Lauritzen96}.
When $G$ is an outer-planar graph, the problem can be solved by the method of~\cite{Schraudolph-10}, which reduces it to a planar Ising model, for which efficient algorithms exist~\cite{Shih-90}.

\textbf{Interaction Restrictions.}
Submodularity is a restriction closely related to problems solvable by minimum cut. A quadratic pseudo-Boolean function $f$ is {\em submodular} iff its quadratic terms are non-positive.  It is then known to be equivalent with finding a minimum cut in a corresponding network~\cite{Hammer:OR68}. 
Another way to state this condition for QPBO is 
$\forall (u, v) \in \E, f_{uv}(0, 1) + f_{uv}(1, 0) \geq f_{uv}(0, 0) + f_{uv}(1, 1).$
%
However, submodularity is more general.  It extends to higher-order and multi-label problems. Submodularity is considered a discrete analog of convexity. Just as convex functions are relatively easy to optimize, general submodular function minimization  can be solved in strongly polynomial time~\cite{Schrijver00}.
Kolmogorov and Zabin introduced submodularity in computer vision and showed that binary 2\textsuperscript{nd} order and 3\textsuperscript{rd} order submodular problems can be always reduced to minimum cut, which is much more efficient than general submodular function minimization~\cite{kolmogorov2004energy}. \citeauthor{Zivny-08-binary} and \citeauthor{ramalingam2008exact} give more results on functions reducible to minimum cut~\cite{Zivny-08-binary,ramalingam2008exact}.
For QPBO on an unrestricted graph structure, the following {\em dichotomy} result has been proven by~\citet{Cohen-04}:
either the problem is submodular and thus in PO or it is NP-hard (\ie, submodular problems are the only ones that are tractable in this case).
%
%

For multi-label problems~\citeauthor{Ishikawa03} proposed a reduction to minimum cut for problems with convex interactions, \ie, where $f_{uv}(x_u,x_v) = g_{uv}(x_u - x_v)$ and $g_{uv}$ is convex and symmetric~\cite{Ishikawa03}. 
It is worth noting that when the unary terms are convex as well, the problem can be solved even more efficiently~\cite{Hochbaum-2001-MRF,Kolmogorov05primal-dualalgorithm}. 
The same reduction~\cite{Ishikawa03} remains correct for a more general class of submodular multi-label problems.
In modern terminology, component-wise minimum $x \wedge y$ and component-wise maximum $x \vee y$ of complete labelings $x$, $y$ for all nodes are introduced ($x,y \in \L^\V$). These operations depend on the {\em order of labels} and, in turn, define a lattice on the set of labelings. The function $f$ is called {\em submodular on the lattice} if $f(x \vee y) + f(x \wedge y) \leq f(x) + f(y)$ for all $x$, $y$~\cite{Topkis-78}.
In the pairwise case, the condition can be simplified to the form of submodularity common in computer vision \cite{ramalingam2008exact}:
$f_{uv}(i, j+1) + f_{uv}(i+1, j) \geq f_{uv}(i, j) + f_{uv}(i+1, j+1).$
In particular, it is easy to see that a convex $f_{uv}$ satisfies it~\cite{Ishikawa03}.
\citet{Kolmogorov-10} and \citet{Arora-12} proposed maxflow-like algorithms for higher order submodular energy minimization. 
\citeauthor{DSchlesinger-07-permuted} proposed an algorithm to find a reordering in which the problem is submodular if one exists~\cite{DSchlesinger-07-permuted}. 
However, unlike in the binary case, solvable multi-label problems are more diverse. A variety of problems are generalizations of submodularity and are in PO, including symmetric tournament pair, submodularity on arbitrary trees, submodularity on arbitrary lattices, skew bisubmodularity, and bisubmodularity on arbitrary domains (see references in~\cite{Thapper-13}).
\citet{Thapper-12} and \citet{Kolmogorov-12-LP-power} characterized these tractable classes and proved a similar dichotomy result: a problem of unrestricted structure is either solvable by LP-relaxation (and thus in PO) or it is NP-hard. It appears that LP relaxation is the most powerful and general solving technique~\cite{Zivny-Werner-Prusa-ASP-MIT2014}.

\textbf{Mixed Restrictions.}
In comparison, results with mixed structure and interaction restrictions are rare. One example is a planar Ising model without unary terms~\cite{Shih-90}. Since there is a restriction on structure (planarity) and unary terms, it does not fall into any of the classes described above. 
Another example is the restriction to supermodular functions on a bipartite graph, solvable by~\cite{DSchlesinger-07-permuted} or by LP relaxation, but not falling under the characterization~\cite{Thapper-13} because of the graph restriction. 

\textbf{Algorithmic Applications.}
The aforementioned tractable formulations in PO can be used to solve or approximate harder problems. Trees, cycles and planar problems are used in dual decomposition methods~\cite{Komodakis-subgradient,KomodakisP08,BatraGPC10}. Binary submodular problems are used for finding an optimized crossover of two-candidate multi-label solutions. An example of this technique, the expansion move algorithm, achieves a constant approximation ratio for the Potts model~\cite{boykov2001approximate}. Extended dynamic programming can be used to solve restricted segmentation problems~\cite{Felzenszwalb-10-Tiered} and as move-making subroutine~\cite{VineetWT12}.
LP relaxation also provides approximation guarantees for many problems~\cite{BachHG15, Chekuri-01, Kleinberg99,Komodakis-07}, placing them in the APX or poly-APX class.

\subsection{Class APX and Class log-APX (Bounded Approximation)} \label{sec:apx}

Problems that have bounded approximation in polynomial time usually have certain restriction on the interaction type. The Potts model may be the simplest and most common way to enforce the smoothness of the labeling. Each pairwise interaction depends on whether the neighboring labellings are the same, \ie $f_{uv}(x_u,x_v) = c_{uv}\delta(x_u, x_v)$.  Boykov \etal showed a reduction to this problem from the NP-hard multiway cut~\cite{boykov2001approximate}, also known to be APX-complete~\cite{ausiello1999complexity, dahlhaus1994complexity}. They also proved that their constructed alpha-expansion algorithm is a 2-approximate algorithm. These results prove that the Potts model is in APX but not in PO. However, their reduction from multiway cut is not an AP-reduction, as it violates the third condition of AP-reducibility. Therefore, it is still an open problem whether the Potts model is APX-complete.
Boykov \etal also showed that their algorithm can approximate the more general problem of metric labeling~\cite{boykov2001approximate}. The energy is called {\em metric} if, for an arbitrary, finite label space $\mathcal{L}$, the pairwise interaction satisfies a) $ f_{uv}(\alpha, \beta) = 0$, b) $f_{uv}(\alpha, \beta) = f_{uv}(\beta, \alpha) \geq 0$, and c) $f_{uv}(\alpha, \beta) \leq f_{uv}(\beta, \gamma) + f_{uv}(\beta, \gamma)$,
for any labels $\alpha$, $\beta$, $\gamma \in \mathcal{L}$ and any $uv \in \E$. Although their approximation algorithm has a bound on the performance ratio, the bound depends on the ratio of some pairwise terms, a number that can grow exponentially large. For metric labeling with $k$ labels, Kleinberg \etal proposed an $O(\log k \log \log k)$-approximation algorithm. This bound was further improved to $O(\log k)$ by \citet{Chekuri:2005:LPF}, making metric labeling a problem in log-APX \footnote{An $O(\log k)$-approximation implies an $O(\log |x|)$-approximation
(see 
\ifdefined\NOAPPENDIX
    Corollary C.1).
\else
    \cref{C:label-approx}).
\fi
}.

We have seen that a problem with convex pairwise interactions is in PO. An interesting variant is its truncated counterpart, \ie, $f_{uv}(x_u,x_v) = w_{uv}\min\{d(x_u - x_v), M\}$, where $w_{uv}$ is a non-negative weight, $d$ is a convex symmetric function to define the distance between two labels, and $M$ is the truncating constant~\cite{veksler2007graph}.
This problem is NP-hard~\cite{veksler2007graph}, but \citet{Kumar-11-improved} have proposed an algorithm that yields bounded approximations with a factor of $2+\sqrt{2}$ for linear distance functions and a factor of $O(\sqrt{M})$ for quadratic distance functions\footnote{In these truncated convex problems, the ratio bound is defined for the pairwise part of the energy \cref{eq:1}. The approximation ratio in accordance to our definition is obtained assuming the unary terms are non-negative.}. This bound is analyzed for more general distance functions by~\citet{kumar2014rounding}.

Another APX problem with implicit restrictions on the interaction type is logic MRF~\cite{bach2015unifying}. It is a powerful higher order model able to encode arbitrary logical relations of Boolean variables. It has energy function $f(x) = \sum_i^n w_iC_i$, where each $C_i$ is a disjunctive clause involving a subset of Boolean variables $x$, and $C_i = 1$ if it is satisfied and 0 otherwise. Each clause $C_i$ is assigned a non-negative weight $w_i$. The goal is to find an assignment of $x$ to maximize $f(x)$. As disjunctive clauses can be converted into polynomials, this is essentially a pseudo-Boolean optimization problem. However, this is a special case of general 2-label energy minimization, as its polynomial basis spans a subspace of the basis of the latter. \citet{bach2015unifying} proved that logic MRF is in APX by showing that it is a special case of MAX-SAT with non-negative weights.
%

\section{Discussion}\label{sec:discussion}

The algorithmic implications of our inapproximability have been discussed above. Here, we focus on the discussion of practical implications. 
The existence of an approximation guarantee indicates a practically relevant class of problems where one may expect reasonable performance. In structural learning for example, it is acceptable to have a constant factor approximation for the inference subroutine when efficient exact algorithms are not available. \citeauthor{finley2008training} proved that this constant factor approximation guarantee yields a multiplicative bound on the learning objective, providing a relative guarantee for the quality of the learned parameters~\cite{finley2008training}. An optimality guarantee is important, because the inference subroutine is repeatedly called, and even a single poor approximation, which returns a not-so-bad worst violator, will lead to the early termination of the structural learning algorithm. 

However, despite having no approximation ratio guarantee, algorithms such as the extended roof duality algorithm for QPBO~\cite{rother2007optimizing} are still widely used. This gap between theory and application applies not only to our results but to all other complexity results as well. We list several key reasons for the potential lack of correspondence between theoretical complexity guarantees and practical performance.

\textbf{Complexity results address the worst case scenario.} Our inapproximability result guarantees that for any polynomial time algorithm, there exists an input instance for which the algorithm will produce a very poor approximation. However, applications often do not encounter the worst case. Such is the case with the simplex algorithm, whose worst case complexity is exponential, yet it is widely used in practice. 




\textbf{Objective function is not the final evaluation criterion.} In many image processing tasks, the final evaluation criterion is the number of pixels correctly labeled. The relation between the energy value and the accuracy is implicit. In many cases, a local optimum is good enough to produce a high labeling accuracy and a visually appealing labeling.

\textbf{Other forms of optimality guarantee or indicator exist.} 
Approximation measures in the distance of solutions or in the expectation of the objective value are likely to be prohibitive for energy minimization, as they are for Bayesian networks~\cite{Kwisthout-11,Kwisthout-13, kwisthout2015tree}.
On the other hand, a family of energy minimization algorithms has the property of being {\em persistent} or {\em partial optimal}, meaning a subset of nodes have consistent labeling with the global optimal one~\cite{Boros:TR91-maxflow, BorosHammer01}. Rather than being an optimality guarantee, persistency is an optimality indicator. In the worst case, the set of persistent labelings could be empty, yet the percentage of persistent labelings over the all the nodes gives us a notion of the algorithm's performance on this particular input instance. Persistency is also useful in reducing the size of the search space \cite{kohli2008partial, SSS-15-IRI}. Similarly, the per-instance integrality gap of duality based methods is another form of optimality indicator and can be exponentially large for problems in general \cite{Komodakis-07, Sontag-12}.

\section{Conclusion}

In this paper, we have shown inapproximablity results for energy minimization in the general case and planar 3-label case. In addition, we present a unified overview of the complexity of existing energy minimization problems by arranging them in a fine-grained complexity scale. These altogether set up a new viewpoint for interpreting and classifying the complexity of optimization problems for the computer vision community.  In the future, it will be interesting to consider the open questions of the complexity of structure-, rank-, and expectation-approximation for energy minimization.

\section*{Acknowledgements}
This material is based upon work supported by the National Science Foundation under Grant No. IIS-1328930 and by the European Research Council under the Horizon 2020 program, ERC starting grant agreement 640156.

{\small
\renewcommand{\bibname}{\protect\leftline{References\vspace{-2ex}}}
\bibliographystyle{splncsnat}
\bibliography{bib/strings,bib/egbib,bib/all}
}

\ifdefined\NOAPPENDIX
\else
    \newpage
    \appendix
    \setboolean{InAppendix}{true}

    \title{Complexity of Discrete Energy Minimization Problems (ECCV'16 Appendix)}

\author{Mengtian Li \qquad Alexander Shekhovtsov \qquad Daniel Huber}
\institute{}

\authorrunning{Mengtian Li, Alexander Shekhovtsov and Daniel Huber}

\maketitle

\section{Formal Proofs}\label{sec:formalproof}


Note for all proofs in this section, we assign integer values to Boolean functions: 0 for False and 1 for True.


\subsection{General Case}


\Tmain*
\begin{proof}
We reduce from the following problem.
\begin{problem}[\cite{ausiello1999complexity}, Section 8.3.2]{\bf W3SAT-triv}
\begin{itemize}
\item[\sc instance:] Boolean CNF formula $F$ with variables $x_1,\cdots,x_n$ and each clause assuming exactly 3 variables; non-negative integer weights $w_1,\cdots, w_n$.
\item[\sc solution:] Truth assignment $\tau$ to the variables that either satisfies $F$ or assigns the trivial, all-true assignment.
\item[\sc measure:] $\min \sum_{i=1}^n w_i \tau(x_i)$.
\end{itemize}
\end{problem}

W3SAT-triv is known to be exp-APX-complete \cite{ausiello1999complexity}. We use an AP-reduction from W3SAT-triv to prove the same completeness result for QPBO. The optimal value of W3SAT-triv is upper bounded by $M := \sum_{i}w_i$ because the all-true assignment is feasible.
The objective weight is represented in QPBO as unary terms $f_i(x_i) = w_i x_i$. 
For every Boolean clause $C(x_i,x_j,x_k) \in F$ we construct a triple-wise term
\begin{align}
\phi_{ijk}(x_i,x_j,x_k) = M(1-C(x_i,x_j,x_k)).
\end{align}
This term takes the large value $M$ iff $C$ is not satisfied and $0$ otherwise. Further, the Boolean clause $C(x_i,x_j,x_k)$ can be represented uniquely as a multi-linear cubic polynomial. For example, a clause $x_1 \lor {\bar x}_2 \lor {\bar x}_3$ can be represented as
\begin{align}
    1 - (1 - x_1) x_2 x_3 = x_1 x_2 x_3 - x_2 x_3 + 1.
\end{align}
Then we obtain similar representation with a single third order term and a second order multi-linear polynomial for $\phi_{ijk}$:
\begin{align}
\phi_{ijk} = M(a x_i x_j x_k + \sum_{J} b_{J} \prod_{l\in J} x_l),
\end{align}
where $J\subseteq \{i,j,k\}, |J|\leq 2$, $\prod_{l\in J} x_l$ is set to 1 if $J$ is empty, $a \in \{-1, 1\}$, and $b_{J} \in \{-1, 0, 1\}$. We now apply the quadratization techniques \cite{ishikawa2011transformation} to $\phi_{ijk}$. After introducing an auxiliary variable $x_w$ with $w > n$, we observe the following identities:
\begin{align}
-x_i x_j x_k & = \min_{x_w\in \{0, 1\}} -x_w (x_i+x_j+x_k-2) \\
x_i x_j x_k & =  \min_{x_w\in \{0, 1\}} \big( (x_w{-}1) (x_i{+}x_j{+}x_k{-}1) + (x_i x_j{+}x_i x_k{+}x_j x_k) \Big)
\end{align}
In either case, substituting the cubic term $a x_i x_j x_k$ in $\phi_{ijk}$ with the expression inside the min operator, we can have a unified quadratic form
\begin{align}
\psi_{ijk} := M \sum_{J_w}b_{J_w}\prod_{l\in J_w} x_l,
\end{align}
where $J_w \subseteq \{i, j, k, w\}, |J_w|\leq 2$ and $\prod_{i\in J_w} x_i$ is set to 1 if $J_w$ is empty. In both cases, the quadratic form takes the same optimal values as its cubic counterpart given the optimal assignment, i.e.,
\begin{align}
 \min_{x_i, x_j, x_k, x_w} \psi_{ijk} = \min_{x_i, x_j, x_k} \phi_{ijk},
\end{align}
but the transformation expands the original range of the cubic term from $\{-1, 0\}$ to $\{-1, 0, 1, 2\}$ and from $\{0, 1\}$ to $\{0, 1, 3\}$ respectively for $a = -1$ and $a = 1$. Therefore, the cost of the constructed instance of QPBO is bounded in the absolute value by $3M$ and the number of added variables is exactly the number of clauses in $F$. Clearly, this construction can be computed in polynomial time. {\em Note that when approximation is used, this transformation is no longer exact ($\psi_{ijk} \neq \phi_{ijk}$), as the optimality of the auxiliary variable $x_w$ cannot be guaranteed. However, it can be verified that under all possible assignments (ignoring the min operator) in either case, $\psi_{ijk} \geq 0$, which is the key for the reduction to be an approximation preserving (AP) one.}

The construction above defines a mapping $\pi$ from any instance of W3SAT-triv ($p_1 \in I_1$) to an instance of QPBO ($p_2 \in I_2$). The mapping $\sigma$ from feasible solutions of $p_2$ ($x \in S_2(p_2)$) to that of $p_1$ is defined as follows: if $f(x) \geq M$, then let the mapped solution $\sigma(p_1, x)$ be the all true assignment, otherwise let the mapped solution $\sigma(p_1, x)$ be $x_i, i \in \{1, ..., n\}$. 

Now, we need to show that $(\pi,\sigma)$ together with a constant $\alpha$ is an AP-reduction. 
Let $m_1$, $m_2$, $m_1^*$ and $m_2^*$ to be short for $m_1(p_1, \sigma(p_1, x))$, $m_2(p_2)$, $m_1^*(p_1)$, and $m_2^*(\pi(p_2))$ respectively, where $*$ indicates the optimal solution.  First, note that $\sigma(p_1, x)$ is always feasible for {\em W3SAT}-triv: either it satisfies $F$ or $f(x) \geq M$ and therefore $\sigma(p_1, x)$ is the all-true assignment. In the first case, since every quadratic term is non-negative, we have
\begin{align}
& m_1 = \sum_{i=1}^n x_i w_i \\
& \leq \sum_{i=1}^n x_i w_i + \sum_{C_{ijk} \in F} \psi_{ijk}(x_i,x_j,x_k) = f(x) = m_2.
\end{align}
In the second case, by construction
\begin{align}
m_1= M \leq f(x) = m_2.
\end{align}
Therefore, no matter which case $m_1 \leq m_2$.
\par
Now for the optimal solution, if $F$ is satisfiable, then by construction $m_1^* = m_2^*$. Recall from \cref{def:ratio}, $R = m/m^*$. 
For any instance $p_1 \in I_1$, for any rational $r > 1$, and for any $x \in S_2(p_2)$, if
\begin{align}
R_2(p_2, x) \leq r,
\end{align}
then
\begin{align}
m_1 \leq m_2 \leq rm_2^* = rm_1^* \\
R_1(p_1, \sigma(p_1, x)) = \frac{m_1}{m_1^*} \leq r
\end{align}
If $F$ is not satisfiable, $m_1^* = M \leq m_2^*$ and $m_2 \geq m_2* \geq M$. Thus, for any instance $p_1 \in I_1$, for any rational $r > 1$, and for any $x \in S_2(p_2)$,
\begin{align}
R_1(p_1, \sigma(p_1, x)) = \frac{m_1}{m_1^*} = \frac{M}{M} = 1 \leq r
\end{align}
Therefore $(\pi,\sigma, 1)$ is an AP-reduction. Since W3SAT-triv is exp-APX-complete and QPBO is in exp-APX, we prove that QPBO is exp-APX-complete.
\end{proof}


\Cgen*
\begin{proof}
We create an AP-reduction from QPBO to $k$-label energy minimization by setting up the unary and pairwise terms to discourage a labeling with the additional $k - 2$ labels.

Denote QPBO as $ \P_1 = (\I_1, \S_1, m_1, \min)$ and $k$-label energy minimization as $ \P_2 = (\I_2, \S_2, m_2, \min)$.
Given an instance $p_1 = (\G = (\V, \E), \L_1, f) \in \I_1$, let $M(p_1)$ be a large number such that all for all ${\bf x}_1 \in \S_1$, $m_1 < M$. For example, we can let 
\begin{align}
    M = \sum_{u \in \V}\sum_{x_u \in \L_1}|f_u(x_u)| + \sum_{(u, v) \in \E}\sum_{x_u \in \L_1}\sum_{x_v \in \L_1}|f_{uv}(x_u, x_v)| + 1. \label{eq:m}
\end{align}

We define the forward mapping $\pi$ from any $p_1 \in I_1$ to $p_2 = (\G = (\V, \E), \L_2, g) \in I_2$ as follows:
\begin{itemize}
    \item $g_u(a) = f_u(a)$, for $\forall a \in \L_1$, and $\forall u \in \V$;
    \item $g_u(a) = M$, for $\forall a \notin \L_1$, and $\forall u \in \V$;
    \item $g_{uv}(a, b) = f_{uv}(a,b)$, for $\forall a, b \in \L_1$, and $\forall (u, v) \in \E$;
    \item $g_{uv}(a, b) = M$ if either $a$ or $b$ $\notin \L_1$ for $\forall (u, v) \in \E$.
\end{itemize}
    
This setup has two properties:
\begin{itemize}
    \item $m_2 \geq M$ if and only if the labeling ${\bf x}_2 \in \S_2$ includes labels that are not in $\L_1$;
    \item $m_1^*$ = $m_2^*$, for any $p_1$ and $p_2 = \pi(p_1)$.
\end{itemize}

Then we define the reverse mapping $\sigma$ from any $(p_2, {\bf x}_2)$ to ${\bf x}_1 \in \S_1$ to be
\begin{itemize}
    \item ${\bf x}_1$ = ${\bf x}_2$, if $m_2 < M$;
    \item ${\bf x}_1$ be any fixed feasible solution (e.g., all nodes are labeled as the first label), if $m_2 \geq M$.
\end{itemize}

Observe that in both cases, $m_1 \leq m_2$. For any instance $p_1 \in I_1$, for any rational $r > 1$, and for any ${\bf x}_2 \in S_2$, if
\begin{align}
R_2(p_2, {\bf x}_2) = \frac{m_2}{m_2^*} \leq r,
\end{align}
then
\begin{align}
m_1 \leq m_2 \leq rm_2^* = rm_1^* \\
R_1(p_1, {\bf x}_1) = \frac{m_1}{m_1^*} \leq r
\end{align}
Therefore $(\pi,\sigma, 1)$ is an AP-reduction. As QPBO is exp-APX-complete and all energy minimization problems are in exp-APX, we conclude that $k$-label energy minimization is exp-APX-complete for $k \geq 2$.
\end{proof}

The above construction also formally shows that the energy minimization problem can only become harder when having a larger labeling space, irrespective of the graph structure and the interaction type.

\subsection{Planar Case}\label{sec:prfplanar}


\Tplanar*
\begin{proof}



We create an AP-reduction from 3-label energy minimization to planar 3-label energy minimization by introducing polynomially many auxiliary nodes and edges.

Denote 3-label energy minimization as $ \P_1 = (\I_1, \S_1, m_1, \min)$ and planar 3-label energy minimization as $\P_2 = (\I_2, \S_2, m_2, \min)$.
Given an instance $p_1 \in \I_1$, we compute a large number $M(p_1)$ as in Equation~\cref{eq:m} in the proof for \cref{C:gen}.

The gadget-based reduction presented in Section~\ref{sec:plcase}, defines a forward mapping $\pi$ from any $p_1 = (\G_1 = (\V_1, \E_1), \L, f) \in I_1$ to $p_2 = (\G_2 = (\V_2, \E_2), \L, g) \in I_2$. Let $\V_3$ be the nodes added during the reduction, then $\V_2 = \V_1 \cup \V_3$. The two gadgets {\sc Split} and {\sc UncrossCopy} are used 4 times each to replace an edge crossing (point of intersection not at end points) with a planar representation (Figure~\ref{fig:planarreduct}), introducing 22 auxiliary nodes. Since the gadgets can be drawn arbitrarily small so that they are not intersecting with any other edges, we can repeatedly replace all edge crossings in $\G_1$ with this representation. There can be up to $O(|\E_1|^2)$ edge crossings,
and we have $|\mathcal{V}_3|$ = $O(|\E_1|^2)$.
Given that the reduction adds only a polynomial number of auxiliary nodes, the forward mapping $\pi$ can be computed by a polynomial time algorithm.

This setup has two properties:
\begin{itemize}
    \item $m_2 \leq M$ if and only if the labeling ${\bf x}_1$ is the same as the partial labeling in ${\bf x}_2$ restricting to nodes in $\V_1$ in $\G_2$.
    \item $m_1^*$ = $m_2^*$, for any $p_1$ and $p_2 = \pi(p_1)$.
\end{itemize}

Then we define the reverse mapping $\sigma$ from any $(p_2, {\bf x}_2)$ to ${\bf x}_1 \in \S_1$ to be
\begin{itemize}
    \item ${\bf x}_1$ = ${\bf x}_2$, if $m_2 < M$;
    \item ${\bf x}_1$ be any fixed feasible solution (e.g., all nodes are labeled as the first label), if $m_2 \geq M$.
\end{itemize}

Observe that in both cases, $m_1 \leq m_2$. 
For any instance $p_1 \in I_1$, for any rational $r > 1$, and for any ${\bf x}_2 \in S_2$, if
\begin{align}
R_2(p_2, {\bf x}_2) = \frac{m_2}{m_2^*} \leq r,
\end{align}
then
\begin{align}
m_1 \leq m_2 \leq rm_2^* = rm_1^* \\
R_1(p_1, {\bf x}_1) = \frac{m_1}{m_1^*} \leq r
\end{align}
Therefore $(\pi,\sigma, 1)$ is an AP-reduction. As 3-label energy minimization is exp-APX-complete (\cref{C:gen}) and all energy minimization problems are in exp-APX, we conclude that planar 3-label energy minimization is exp-APX-complete.

\end{proof}

\Cplanark*
\begin{proof}
The proof of \cref{C:gen} is graph structure independent. Therefore, the same proof applies here.
\end{proof}

\section{Relation to Bayesian Networks}\label{sec:BayesNet}
There are substantial differences between results for Bayesian networks~\cite{Abdelbar-98} and our result.
Bayesian networks have a probability density function $p(x)$ that factors according to a directed acyclic graph, \eg, as $p(x_1,x_2,x_3) = p(x_1 | x_2,x_3)p(x_2)p(x_3)$.
Finding the MAP assignment (same as the most probable estimate (MPE)) in a Bayesian network is related to energy minimization~\eqref{eq:1} by letting $f(x) = -\log(p(x))$. The product is transformed into the sum and so, \eg, factor $p(x_1 | x_2,x_3)$ corresponds to term $f_{1,2,3}(x_1,x_2,x_3)$.
\par
The inapproximability result of~\citet{Abdelbar-98} holds even when restricting to binary variables and factors of order three. However, \cite[Section 6.1]{Abdelbar-98} count incoming edges of the network. For a factor $p(x_1 | x_2,x_3)$, there are two, but the total number of variables it couples is three and therefore such a network does not correspond to QPBO. If one restricts to factors of at most two variables, \eg, $p(x_1 | x_2)$, in a Bayesian network, then only tree-structured models can be represented, which are easily solvable. 
\par
In the other direction, representing pairwise energy~\eqref{eq:1} as a Bayesian network may require to use factors of order up to $|\mathcal{V}|$ composed of conditional probabilities of the form $p(x_i \mid x_j, x_k, \cdots )$ with the number of variables depending on the vertex degrees. It is seen that while the problems in their most general forms are convertible, fixed-parameter classes (such as order and graph restrictions) differ significantly. In addition, the approximation ratio for probabilities translates to an absolute approximation (an additive bound) for energies. The next corollary of our main result illustrates this point.

\begin{corollary}\label{C:prob-approx}
It is NP-hard to approximate MAP in the value of probability~\eqref{p-approx-ratio} with any exponential ratio $\exp(r(n))$, where $r$ is polynomial.
\end{corollary}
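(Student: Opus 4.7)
The plan is to pass through the energy domain via $f = -\log p$. Since $p(x^*)/p(x) = \exp(f(x) - f(x^*))$, approximating probability within ratio $\exp(r(n))$ is precisely approximating energy within \emph{additive} gap $r(n)$. So it suffices to show that for every polynomial $r$ it is NP-hard to produce in polynomial time a configuration $x$ with $f(x) \leq f(x^*) + r(n)$ on a QPBO instance of size $n$.

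To exhibit such hardness, I would reduce directly from 3-SAT. Given a 3-CNF formula $F$ with $n$ variables and $m$ clauses, set
\begin{align*}
f(x) = K \sum_{j=1}^m \bigl(1 - C_j(x)\bigr) + 1,
\end{align*}
where $C_j(x) \in \{0,1\}$ indicates whether clause $j$ is satisfied and $K$ is a positive integer parameter to be fixed. Exactly as in the proof of \Cref{th:main}, each cubic indicator $(1 - C_j)$ can be quadratized via the construction of \cite{ishikawa2011transformation} at the cost of a few auxiliary variables per clause, yielding a QPBO instance whose minimum equals $1$ if $F$ is satisfiable and is at least $K + 1$ otherwise. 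The bit-size $N$ of the resulting instance is polynomial in $n + m + \log K$.

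Fixing any polynomial $r$, I would pick $K := 2^{s(n+m)}$ for a polynomial $s$ chosen once and for all so that $K > r(N) + 1$; since the dependence of $N$ on $\log K$ is linear and $r$ is polynomial, such an $s$ exists and $K$ has polynomial bit-length. If a polynomial-time algorithm $A$ achieved $p(x^*)/p(A(x)) \leq \exp(r(N))$, then $f(A(x)) \leq f(x^*) + r(N)$, which is strictly less than $K + 1$ in the satisfiable case and at least $K + 1$ in the unsatisfiable case. Thresholding $f(A(x))$ at $K+1$ therefore decides 3-SAT in polynomial time, forcing $\mathrm{P} = \mathrm{NP}$.

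The only delicate point is the mild circularity between the instance size $N$ and the gap parameter $K$ (which itself enlarges $N$). I would handle it either by picking $s$ large enough at the outset via a one-shot fixed-point argument, or by parameterizing the reduction by a target polynomial size bound and scaling $K$ accordingly. Either way, the instance stays polynomial in the 3-SAT input, and the reduction preserves the NP-hardness conclusion for every polynomial $r$.
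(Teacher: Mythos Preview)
Your proposal is correct but takes a genuinely different route from the paper.

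The paper's argument is a two-line deduction from its own main theorem. After the same first step as yours ($p(x^*)/p(x)\le e^{r(n)}$ iff $f(x)\le f(x^*)+r(n)$), the paper simply divides by $f(x^*)$ and uses that in an NPO problem the measure is a \emph{positive integer}, hence $f(x^*)\ge 1$. This gives $f(x)/f(x^*)\le 1+r(n)$, a polynomial-ratio approximation for energy minimization, which contradicts the exp-APX-completeness established in \cref{C:gen}. No new reduction is built; the corollary is literally a corollary.

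Your approach instead constructs a fresh gap reduction from 3-SAT, manufacturing instances whose optimum is $1$ versus $\ge K{+}1$ and choosing $K$ to defeat the additive slack $r(N)$. This is more self-contained (it does not invoke exp-APX-completeness) but pays for it with the circularity between $K$ and the instance size $N$, which you correctly flag and propose to resolve by a fixed-point choice of the exponent $s$. That works, but it is extra bookkeeping that the paper's route sidesteps entirely via the single observation $f(x^*)\ge 1$. Conversely, your argument has the minor advantage that it would survive even if one had not proved the full exp-APX-completeness: all it needs is the standard 3-SAT hardness plus quadratization.
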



\begin{proof}
Recall that the probability $p(x)$ is given by the exponential map of the energy: $p(x) = \exp(-f(x))$. Assume for contradiction that there is a polynomial time algorithm $\A$ that finds solution $x$ and a polynomial $r(n) \geq 0$ for $n > 0$ such that
\begin{align}
\frac{p(x^*)}{p(x)} \leq e^{r(n)}
\end{align}
for all instances of the problem.
Taking the logarithm,
\begin{align}
-f(x^*) + f(x) \leq r(n).
\end{align}
or, 
\begin{align}
f(x) \leq r(n) + f(x^*).
\end{align}
Divide by $f(x^*)$, which, by definition of NPO is positive, we obtain
\begin{align}\label{f-approx}
\frac{f(x)}{f(x^*)} \leq 1 + \frac{1}{f(x^*)}r(n) \leq 1 + r(n).
\end{align}
where we have used that $f(x^*)$ is integer and positive and hence it is greater or equal to 1. Inequality~\eqref{f-approx} provides a polynomial ratio approximation for energy minimization. Since the latter is exp-APX-complete (\cref{C:gen}), this contradicts existence of the polynomial algorithm $\A$, unless P = NP.
\end{proof}

Note, this corollary provides a stronger inapproximability result for probabilities than was proven in~\cite{Abdelbar-98}.

\begin{remark}~\citet{Abdelbar-98} have shown also the following interesting facts. For Bayesian networks, the following problems are also APX-hard (in the value of probability):
\begin{itemize}
\item Given the optimal solution, approximate the second best solution;
\item Given the optimal solution, approximate the optimal solution conditioned on changing the assignment of one variable.
\end{itemize}
\end{remark}


\section{Miscellaneous}

This result is used in Section~\ref{sec:apx}.
\begin{corollary} \label{C:label-approx}
An $O(\log k)$-approximation implies an $O(\log |x|)$-approximation for $k$-label energy minimization problems.
\end{corollary}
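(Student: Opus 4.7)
The plan is to argue that the number of labels $k$ is necessarily bounded above by the input size $|x|$, so that a $\log k$ factor immediately converts into a $\log |x|$ factor. This is essentially a counting argument on the encoding of the input.

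The first step is to recall the definition of an NPO instance for $k$-label energy minimization. An instance specifies the graph $\mathcal{G} = (\mathcal{V}, \mathcal{E})$, together with the unary terms $f_u \colon \mathcal{L} \to \mathbb{N}$ for each $u \in \mathcal{V}$ and pairwise terms $f_{uv} \colon \mathcal{L}^2 \to \mathbb{N}$ for each $(u,v) \in \mathcal{E}$. In order to define a unary table at a single node with label set of size $k$, the encoding must list at least $k$ values (one per label). Therefore the input size satisfies $|x| \geq k \cdot |\mathcal{V}| \geq k$, which yields $\log k \leq \log |x|$.

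The second step is to apply this to the approximation ratio. Suppose $\mathcal{A}$ is a polynomial-time algorithm with $R(x, \mathcal{A}(x)) \leq c\, \log k$ for some constant $c > 0$ and all instances $x$ with at least two labels. By the bound above, $c\, \log k \leq c\, \log |x|$, so $R(x, \mathcal{A}(x)) \leq c\, \log |x|$, which is exactly the definition of an $O(\log |x|)$-approximate algorithm in the sense of Definition~\ref{def:F-APX}. This places the problem in log-APX under the input-size parametrization used by the complexity axis.

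There is essentially no obstacle here beyond being explicit about the encoding convention. The only subtlety worth flagging is that one must ensure $k \geq 2$ (so that $\log k \geq \log 2 > 0$ and the ratio is well-defined as a function in $(1, \infty)$) and that the approximation ratio function $r(n)$ used in Definition~\ref{def:F-APX} takes values strictly greater than $1$; both conditions are met by replacing $c\log k$ with $\max\{1 + \varepsilon, c\log k\}$ if necessary, which does not affect membership in log-APX.
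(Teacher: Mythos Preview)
Your proof is correct and follows essentially the same approach as the paper: both argue that the encoding of an instance must contain at least the $k$ entries of a unary table, so $k \leq |x|$ and hence $\log k = O(\log |x|)$. The paper phrases this via $|x| = \Theta(k^2|\mathcal{V}|^2 d)$ and the chain $O(\log k) = O(\log k + \log |\mathcal{V}|) = O(\log k|\mathcal{V}|) = O(\log |x|)$, but your direct lower bound $|x| \geq k|\mathcal{V}| \geq k$ is the same idea stated more transparently.
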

\begin{proof}
Observe that an instance of the energy minimization problem \cref{eq:1} is completely specified by a set of all unary terms $f_u$ and pairwise terms $f_{uv}$. This defines a natural encoding scheme to describe an instance of an energy minimization problem with binary alphabet $\{0, 1\}$. Assume each potential is encoded by $d$ digits, the input size
\begin{align}
|x| = O((k|\mathcal{V}| + k^2|\mathcal{V}|^2)d) = O(k^2|\mathcal{V}|^2).
\end{align}
For an $O(\log k)$-approximation algorithm, the performance ratio
\begin{align}
    r = O(\log k) = O(\log k + \log |\mathcal{V}|) = O(\log k|\mathcal{V}|) = O(\log|x|),
\end{align}
which implies an $O(\log |x|)$-approximation algorithm.

\end{proof}

\fi



\end{document}